\def\X{X} 
\def\ds{\displaystyle}
\DeclareMathOperator{\R}{\mathbb{R}}
\DeclareMathOperator{\E}{\mathbb{E}}
\DeclareMathOperator{\F}{F}
\newtheorem{theorem}{Theorem}
\newtheorem{lemma}{Lemma}
\newtheorem{corollary}{Corollary}
\newtheorem{rem}{Remark}
\newtheorem{ex}{Example}
\newtheorem{ass}{Assumption}
\providecommand{\nor}[1]{\left\lVert {#1} \right\rVert}
\providecommand{\scal}[2]{\left\langle{#1},{#2}\right\rangle}
\newcommand{\RR}{\mathbb R}
\renewcommand{\H}{\mathcal{X}}
\newcommand{\PP}{\mathbb{P}}
\newcommand{\hh}{\mathcal H}
\newcommand{\EE}{\mathcal E}
\newcommand{\MM}{\mathcal M}
\renewcommand{\SS}{\mathcal S}
\newcommand{\Sn}{S_n}
\newcommand{\Sk}{S_k}
\newcommand{\Snk}{S_{n,k}}
\newcommand{\Fn}{F_{n}}
\newcommand{\Fk}{F_{k}}
\newcommand{\Fnk}{F_{n,k}}
\renewcommand{\S}{S}
\newcommand{\de}{p}
\newcommand{\me}{\rho}
\title{Learning Manifolds with K-Means and K-Flats}
\author{
Guillermo D. Canas$^{\star,\dagger}$ \hspace*{0.3in} Tomaso Poggio$^{\star,\dagger}$ \hspace*{0.3in} Lorenzo A. Rosasco$^{\star,\dagger}$ \\
$\star$ Laboratory for Computational and Statistical Learning
 - MIT-IIT \\
$\dagger$ CBCL, McGovern Institute - Massachusetts Institute of Technology\\
\texttt{guilledc@mit.edu} \quad \texttt{tp@ai.mit.edu} \quad \texttt{lrosasco@mit.edu}
}
\begin{document}

\maketitle

\begin{abstract}
{
 We study the problem of  estimating  a manifold  from random samples.
In particular, we consider piecewise constant and piecewise linear estimators 
induced by k-means and k-flats, and analyze their performance.  
We extend previous results for k-means in two separate directions. 
First, we provide new results for k-means reconstruction on manifolds and, secondly, 
	we prove reconstruction bounds  for higher-order approximation (k-flats), 
	for which no known results were previously available. 
While the results for k-means are novel, some of the technical tools are well-established in the literature. 
In the case of k-flats, both the results and the mathematical tools are new. 
}


\end{abstract}

\section{Introduction}  \label{sec:setup}

%
 
Our study is  broadly motivated by questions in  high-dimensional learning.
As is well known,  learning in high dimensions  is feasible only if the data distribution satisfies suitable prior assumptions.
One such assumption is that the data distribution lies on, or is close to,  a   low-dimensional set 
embedded  in a high dimensional space,  for instance  a low dimensional  manifold.
This latter assumption  has proved to be useful in practice, as well as amenable to theoretical analysis, and it has 
led to a significant amount of recent work. 
 Starting from  \cite{isomap,RSLLE,beni03}, this set of ideas, broadly referred to as {\em manifold learning}, 
has been applied to  a variety of problems from supervised \cite{ste10} and semi-supervised learning \cite{be06}, 
to clustering \cite{vo07} and dimensionality reduction \cite{beni03}, to name a few.
%

Interestingly, the problem of learning the manifold itself has received less attention: 
given samples from a d-manifold $\MM$ embedded in some ambient space $\H$, the problem is 
 to  learn a set that approximates $\MM$ in a suitable sense.
 This problem has been considered in  computational geometry,  but in a setting in which typically 
the manifold   is a hyper-surface in a low-dimensional space (e.g.\ $\R^3$), 
	and the data  are typically not sampled probabilistically, see for instance~\cite{mls,Eigencrust}. 
The problem of learning a manifold  is also related to that of estimating the  support of a distribution,  
(see~\cite{cufr10,curo03} for recent surveys.) 
In this context, some of the distances  considered to measure approximation quality are the Hausforff distance, 
	and the so-called \emph{excess mass} distance. 

The reconstruction framework that we consider is related to the work of~\cite{Maggioni2011,Hari}, 
	as well as to the framework proposed in~\cite{mapo10}, 
	in which a manifold is approximated by a set, with performance measured by an expected distance to this set.
This setting is similar to the problem of dictionary learning (see for instance~\cite{Maial}, and extensive references therein), 
	in which a dictionary is found by minimizing a similar reconstruction error, 
	perhaps with additional constraints on an associated encoding of the data. 
%
Crucially, while the dictionary is learned on the empirical data, the quantity of interest is the expected reconstruction error, 
	which is the focus of this work. 
%

We analyze this problem by focusing on two important, and widely-used algorithms, namely k-means and k-flats. 
The k-means algorithm can be seen to define a piecewise constant approximation of $\MM$.
 Indeed, it   induces a Voronoi decomposition on  $\MM$, in which each Voronoi region 
is effectively approximated by a fixed mean. 
Given this, a natural extension is to consider higher order approximations, such as 
those induced by discrete collections of $k$ $d$-dimensional affine spaces (k-flats), with possibly better resulting performance. 
Since $\MM$ is a $d$-manifold,  
the k-flats approximation naturally resembles the way in which a manifold is locally approximated by its tangent bundle. 

Our analysis extends previous results  for k-means to  the case in which the data-generating distribution is supported on 
a manifold, and provides analogous results for k-flats. 
We note that the k-means algorithm has been widely studied, 
and thus much of our analysis in this case involves the combination of known facts to obtain novel results. 
The analysis of  k-flats, however, requires 
developing substantially new mathematical tools.

The rest of the paper is organized as follows. 
In section~\ref{sec:algo}, we describe the formal  setting and the algorithms that we study.
We begin our analysis by discussing the reconstruction properties of k-means in section~\ref{sec:disc}. 
In section~\ref{sec:results}, we present and discuss our main results, whose proofs are postponed to the appendices. 
\section{Learning Manifolds}\label{sec:algo}


Let $\H$ by a Hilbert space with
inner product  $\scal{\cdot}{\cdot}$,  endowed with a Borel probability 
measure $\me$  supported over a compact, smooth $d$-manifold $\MM$.
We assume the data to be given by a training set, in the form of samples 
$\X_n=(x_1,\dots,x_n)$ drawn identically and independently  with respect to $\me$.\\
Our goal is to {\em learn} a set $\Sn$ that approximates well the manifold.
The approximation (learning error) is measured by the expected reconstruction error
\begin{equation}\label{EErho}
\EE_\me(\Sn) := \ds{\int_{\MM}{d\me(x)\  d_{_{\H}}^2(x, \Sn)}}, 
\end{equation}
where the distance to a set $\S\subseteq \H$   is $d^2_{_{\H}}(x,\S) = \inf_{x'\in\S} d^2_{_{\H}}(x, x')$, 
with $d_{_{\H}}(x, x')=\nor{x-x'}$. 
This is the same reconstruction measure that has been the recent focus of~\cite{mapo10,Bartlett98theminimax,Hari}. 

It is easy to see that any set such that $\S\supset\MM$ will have zero risk, with $\MM$  being the ``smallest" such set (with respect to set containment.) In other words, the above error measure  does not introduce an explicit penalty on the ``size'' of $\Sn$: enlarging any \emph{given} $\Sn$ 
can never increase the learning error.\\
With this observation in mind, we study specific learning algorithms that, given the data, 
	produce a set belonging to some restricted hypothesis space $\hh$  (e.g.\ sets of size $k$ for k-means), 
	which effectively introduces a constraint on the size of the sets. 
Finally, note that the risk of Equation~\ref{EErho} is non-negative and, if the hypothesis space is sufficiently \emph{rich}, the risk of an unsupervised algorithm may converge to zero under suitable conditions. 

%
%
%
%
%

\subsection{Using K-Means and K-Flats for Piecewise Manifold Approximation}

In this work, we focus on two specific algorithms, namely k-means ~\cite{kmeans,Lloyd}  and k-flats~\cite{BradleyKflats}.
Although typically discussed in the Euclidean space case, their definition can be easily extended to a Hilbert space setting. 
The study of manifolds embedded in a Hilbert space is of special interest when considering non-linear (kernel) versions of the algorithms~\cite{KKM}. 
More generally, this setting can be 
seen as a limit case when  dealing with high  dimensional data. Naturally, the more classical 
setting of an absolutely continuous distribution over $d$-dimensional Euclidean space 
is simply a particular case, in which $\H=\mathbb{R}^d$, and $\MM$ is a domain with positive Lebesgue measure.

\noindent
{\bf K-Means}. 
Let $\hh=\mathcal{S}_k$ be the class of sets of size $k$ in $\H$. 
Given a training set  $\X_n$ and a choice of $k$,  k-means is defined by 
the minimization over $S\in\mathcal{S}_k$ of the empirical reconstruction error 
\begin{equation}\label{EEemp}
	\EE_n(S) := \frac{1}{n} \sum_{i=1}^n d_{_{\H}}^2(x_i,S). 
\end{equation}
where, for any fixed set $S$, $\EE_n(S)$  is an unbiased 
empirical estimate of $\EE_\me(S)$, so that k-means  can be seen to be performing a kind of empirical risk minimization 
\cite{Buhmann,BenDavid,mapo10,OTPOCIHS,mapo10}. 

A minimizer of Equation~\ref{EEemp} on $\mathcal{S}_k$ is a discrete set of $k$ {\em means} $\Snk=\{m_1,\dots,m_k\}$, 
which induces a Dirichlet-Voronoi tiling of $\H$: a collection of $k$ regions, each closest to a common mean~\cite{Aurenhammer}
(in our notation, the subscript $n$ denotes the dependence of $\Snk$ on the sample, while $k$ refers to its size.)
By virtue of $\Snk$ being a minimizing set, each mean must occupy the center of mass of the samples in its Voronoi region.  
These two facts imply that it is possible to compute a local minimum of the empirical risk by using a greedy 
coordinate-descent relaxation, namely Lloyd's algorithm~\cite{Lloyd}.  Furthermore,  given a finite sample $\X_n$, 
the number of locally-minimizing sets $\Snk$ is also finite since (by the center-of-mass condition) there cannot be more 
than the number of possible partitions of $\X_n$ into $k$ groups, and therefore the global minimum must be attainable. 
Even though Lloyd's algorithm provides no guarantees of closeness to the global minimizer, in practice 
it is possible to use a randomized approximation algorithm, such as kmeans++~\cite{kmpp}, 
which provides guarantees of approximation to the global minimum in expectation with respect to the randomization.  

\noindent{\bf K-Flats}. 
Let $\hh=\mathcal{F}_k$ be  the class of collections of $k$ {\em flats} (affine spaces) of dimension $d$. 
For any value of $k$, k-flats, analogously to k-means,  aims at finding 
the set $\Fk\in\mathcal{F}_k$ that minimizes the empirical reconstruction~\eqref{EEemp} over  ${\cal F}_k$.
By an argument similar to the one used for k-means,  a global minimizer must be attainable, 
and a Lloyd-type relaxation converges to a local minimum. Note that, in this case, given a 
Voronoi partition of $\MM$ into regions closest to each $d$-flat, new optimizing flats for that partition can 
be computed by a $d$-truncated PCA solution on the samples falling in each region. 

\subsection{Learning a Manifold with K-means and K-flats}

In practice, k-means is often interpreted to be a  clustering  algorithm, with clusters defined by the Voronoi diagram of the set of means $S_{n,k}$.
In this interpretation, Equation~\ref{EEemp} is simply rewritten by summing over the Voronoi regions, and adding 
all pairwise distances between samples in the region (the intra-cluster distances.)
For instance, this point of view is considered in~\cite{buhmerd} 
where k-means is studied from an information theoretic persepective.
K-means can also be interpreted to be performing vector quantization, where the goal is to minimize the 
	encoding error associated to a nearest-neighbor quantizer~\cite{GershoGray}. 
Interestingly, in the limit of increasing sample size, this problem coincides, in a precise sense~\cite{PollardKMC}, 
	with the problem of optimal quantization of probability distributions
(see for instance the excellent monograph of~\cite{GrafLushgyMonograf}.)


When the data-generating distribution is supported on a manifold $\MM$, k-means can be seen to be approximating 
	points on the manifold by a discrete set of means. 
Analogously to the Euclidean setting, this induces a Voronoi decomposition of  $\MM$, in which each Voronoi region 
		is effectively approximated by a fixed mean (in this sense k-means produces a piecewise constant approximation of $\MM$.) 
As in the Euclidean setting, the limit of this problem with increasing sample size is precisely the problem of optimal quantization of distributions on manifolds, 
	which is the subject of significant recent work in the field of optimal quantization~\cite{GruberOQ,gruber2007convex}.

In this paper,  we take the above view of k-means as defining a (piecewise constant) approximation of the manifold $\MM$ 
	supporting the data distribution. 
In particular, we are interested in the behavior of the expected reconstruction error $\EE_\me(\Snk)$, for varying $k$ and $n$. 
This perspective has an interesting relation with dictionary learning, 
	in which one is interested in finding a dictionary, and an associated representation, 
	that allows to approximately reconstruct a finite set of data-points/signals. 
In this interpretation, the set of means can be seen as a dictionary of size $k$  
that produces a maximally sparse representation (the k-means encoding), see for example~\cite{Maial} and references therein.
Crucially, while the dictionary is learned on the available empirical data, the quantity of interest is the expected reconstruction error, and the question of characterizing the performance with respect to this latter quantity naturally arises.
%
%
%

Since k-means produces a piecewise constant approximation of the data, 
a natural idea is to  consider higher orders of approximation, such as 
approximation by discrete collections of $k$ $d$-dimensional affine spaces (k-flats), with possibly better performance. 
Since $\MM$ is a $d$-manifold,  
the approximation induced by k-flats may more naturally resemble the way in which a manifold is locally approximated by its tangent bundle. 
We provide in Sec.~\ref{sec:kflats_rates} a partial answer to this question. 

\section{Reconstruction Properties of k-Means}\label{sec:disc}

Since 
we are interested in 
the behavior of the expected reconstruction~\eqref{EErho} of k-means and k-flats for {\em varying $k$ and $n$}, 
before analyzing this behavior, 
we consider 
what is currently known about this problem, based on previous work. 
While k-flats is a relatively new algorithm whose behavior is not yet well understood, 
several properties of k-means are currently known. 

%

Recall that k-means find an discrete set $\Snk$  of size $k$ that best approximates the samples in the sense of~\eqref{EEemp}. 
Clearly, as $k$  increases, the empirical reconstruction error $\EE_n(\Snk)$ 
cannot increase, and typically decreases.  
However, we are ultimately interested in the expected reconstruction error, 
	and therefore would like to understand the behavior of $\EE_\me(\Snk)$ with varying $k,n$. 

In the context of optimal quantization, 
	the behavior of the expected reconstruction 
error $\EE_\me$ has been considered  for an approximating set $\Sk$  
	obtained by minimizing the \emph{expected} reconstruction error itself over the 
hypothesis space $\hh={\cal S}_k$. 
The set $\Sk$ can thus be interpreted  as the output of a {\em population}, or infinite sample version of k-means.
In this case, it is possible to show that $\EE_\me(\Sk)$ is a non increasing function of $k$ and, in fact,  to derive explicit rates. 
For example in the case $\H=\R^d$, and under fairly general technical assumptions, 
	it is possible to show that $\EE_\me(S_k) = \Theta(k^{-2/d})$, where the constants depend on $\rho$ and $d$~\cite{GrafLushgyMonograf}. 

In machine learning, the properties of k-means have been studied, {\em for  fixed $k$}, by considering the 
{\em excess} reconstruction error $\EE_\me(\Snk)-\EE_\me(\Sk)$. 
In particular, this quantity has been studied for $\H=\R^d$, and shown to be, 
with high probability, 
of order $\sqrt{kd/n}$, up-to logarithmic factors~\cite{mapo10}. 
The case where 
$\H$ is a Hilbert space has been considered in~\cite{mapo10,OTPOCIHS}, where an upper-bound of order $k/\sqrt{n}$ is proven to hold with high probability.  The more general setting where $\H$ is a metric space has been studied in~\cite{BenDavid}.
\begin{figure}[t]
\begin{center}
\begin{minipage}{0.45\textwidth}
\centerline{\tiny{Sphere Dataset}}
\includegraphics[width=\columnwidth]{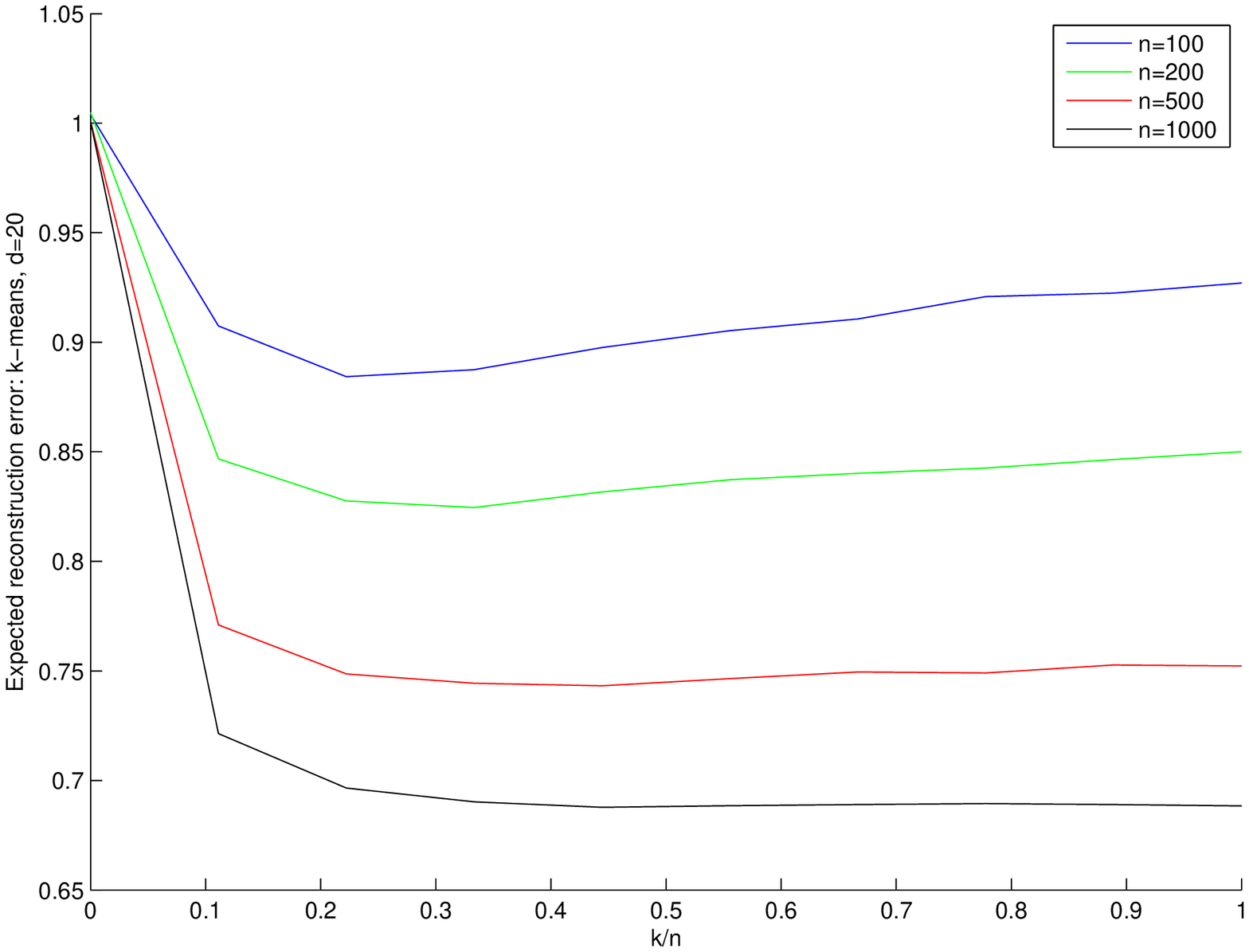}
\end{minipage}
\begin{minipage}{0.45\textwidth}
\centerline{\tiny{MNIST Dataset }}
\includegraphics[width=\columnwidth]{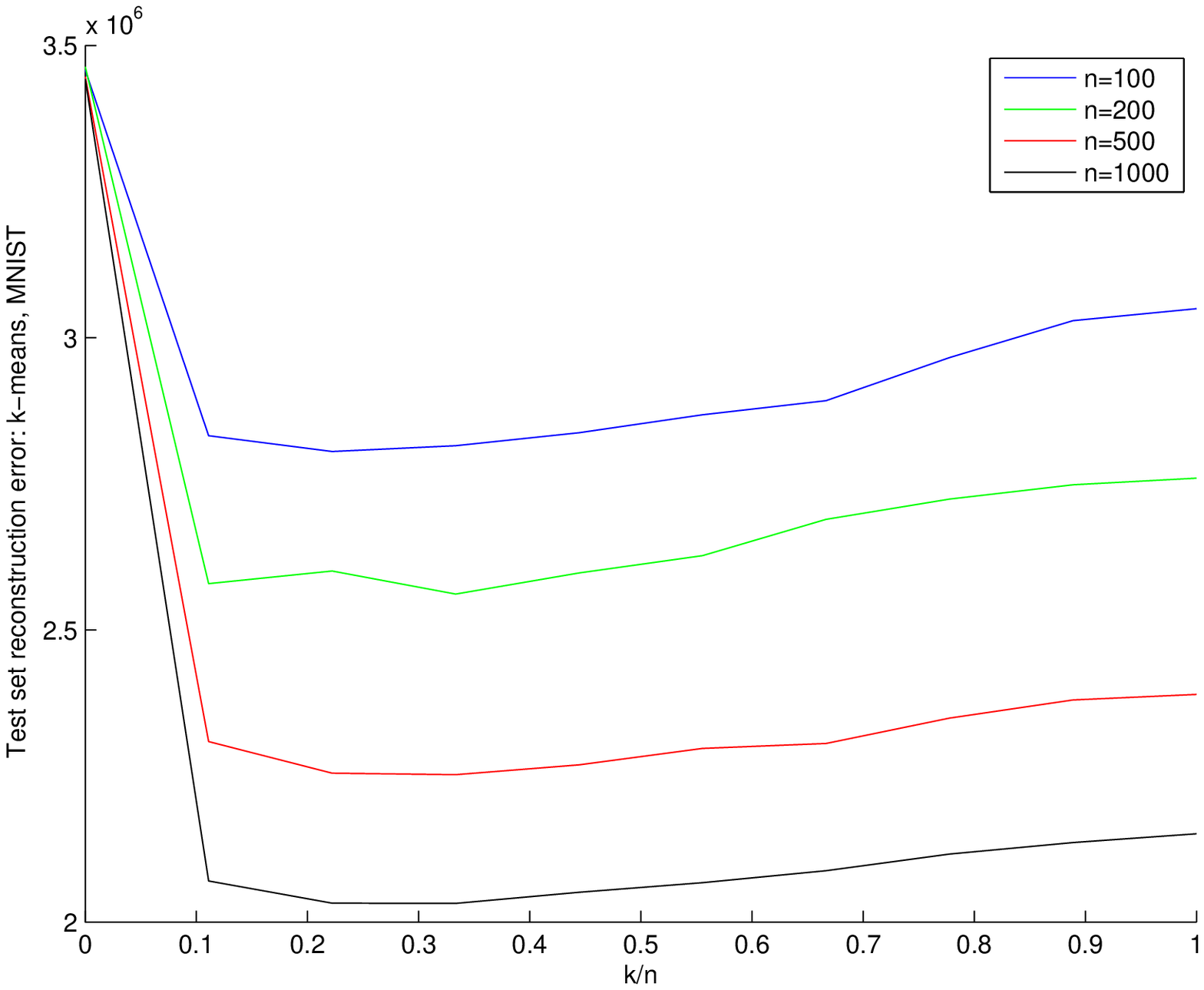}
\end{minipage}
\caption{{
We consider the behavior of  k-means  for   data sets obtained by sampling uniformly a $19$ dimensional sphere embedded in $\R^{20}$ (left).
For each value of $k$,  k-means (with k-means++ seeding) is run $20$ times, 
and the best solution kept.  
The reconstruction performance on a (large) hold-out  set is reported as a function of $k$.
The results for four different training set cardinalities are reported: for small number of points, the reconstruction error 
decreases sharply for small $k$ and then increases,  while it is simply decreasing for larger data sets. 
A similar experiment, yielding similar results, is performed on subsets of the MNIST (\texttt{http://yann.lecun.com/exdb/mnist}) database (right).
In this case the data might be thought to be concentrated around a low dimensional manifold. For example 
\cite{HeiAud05} report an average intrinsic dimension $d$ for each digit to be between $10$ and $13$. }}
\label{Fig:1}
\end{center}
\end{figure}

When analyzing the behavior of $\EE_\me(\Snk)$, 
and in the particular case that $\H=\R^d$, the above results can be combined to obtain, 
with high probability, a bound   of the form 
\begin{align}\label{basic}
 \EE_\me(\Snk) &\le |\EE_\me(\Snk) - \EE_n(\Snk)| + \EE_n(\Snk) - \EE_n(\Sk) + |\EE_n(\Sk) - \EE_\me(\Sk)|+\EE_\me(\Sk) \notag\\
 			&\le C\left ( \sqrt{\frac{kd}{n}} +k^{-2/d} \right) 
\end{align}
up to logarithmic factors, where the constant $C$ does not depend on $k$ or $n$ (a complete derivation is given in the Appendix.) 
The above inequality  suggests a somewhat surprising effect: the expected reconstruction 
properties of  k-means may be described by a {\em trade-off} between a statistical error (of order $\sqrt{\frac{kd}{n}}$)
and a geometric approximation error (of order $k^{-2/d}$.)  

The existence of such a tradeoff between the approximation, 
	and the statistical errors may itself not be entirely obvious, see the discussion in~\cite{Bartlett98theminimax}. 
For instance, in the k-means problem, it is intuitive that, as more means are inserted, the expected distance from a random sample to the means should decrease, 
and one might expect a similar behavior for the expected reconstruction error.
This observation naturally begs the question of whether and when  this trade-off really exists  or if it is simply a result of the looseness in the bounds. 
In particular, one could ask how tight the bound~\eqref{basic} is. 

While the bound on $\EE_\me(\Sk)$ is known to be tight for $k$ sufficiently large~\cite{GrafLushgyMonograf}, 
 the remaining terms (which are dominated by $|\EE_\me(\Snk) - \EE_n(\Snk)|$) are derived by controlling the supremum of an empirical process
\begin{equation}\label{eqsup}
	\sup_{S\in\Sk} | \EE_n(S) - \EE_\rho(S)|
\end{equation}
and it is unknown whether available bounds for it are tight~\cite{mapo10}. 
Indeed, it is not clear how close the \emph{distortion redundancy} $\EE_\rho(\Snk) - \EE_\rho(\Sk)$ 
	is to its known lower bound of order $d\sqrt{\frac{k^{1-\frac{4}{d}}}{n}}$ (in expectation)  ~\cite{Bartlett98theminimax}. 
More importantly, we are not aware of a lower bound for  $\EE_\rho (\Snk)$ itself.
Indeed, as pointed out in~\cite{Bartlett98theminimax}, ``The exact dependence of the minimax distortion redundancy on k and d is still a challenging open problem".

Finally, we note that, whenever a trade-off can be shown to hold, it may be 
used to justify a heuristic for choosing $k$ empirically as the value that minimizes the reconstruction error in a hold-out set. 




In Figure~\ref{Fig:1} we perform some simple numerical simulations showing that the trade-off indeed occurs in certain regimes. 
The following  example provides a situation where a trade-off can be easily shown to occur.

\begin{figure}[t]
\begin{center}
\subfigure[$\EE_\me(S_{k=1})\simeq 1.5$]{\label{fig:tradeoffa}\includegraphics[width=3.5cm]{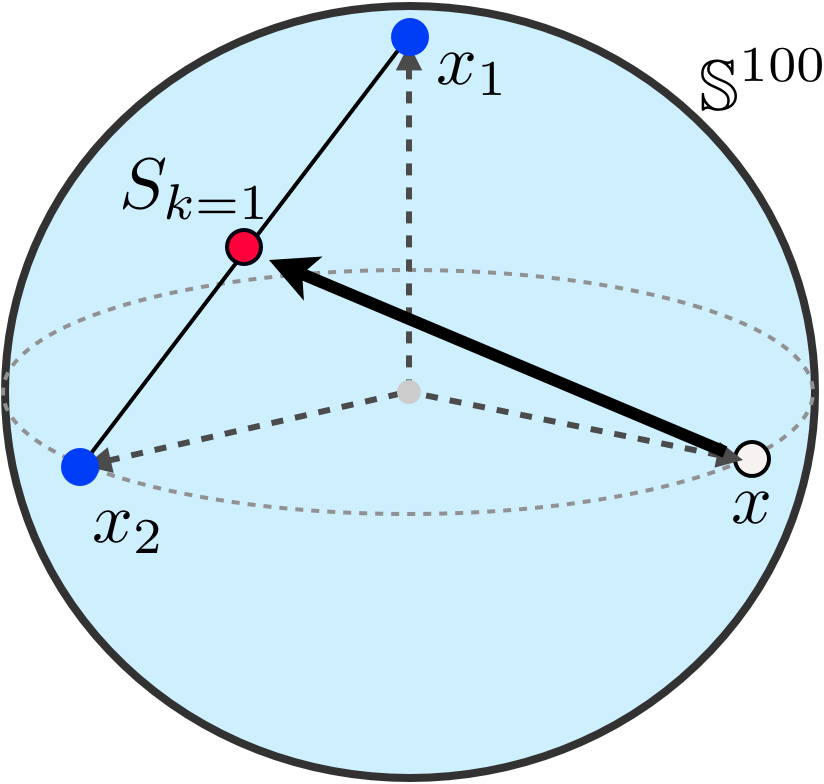}}\quad\quad\quad
\subfigure[$\EE_\me(S_{k=2})\simeq 2$]{\label{fig:tradeoffb}\includegraphics[width=3.5cm]{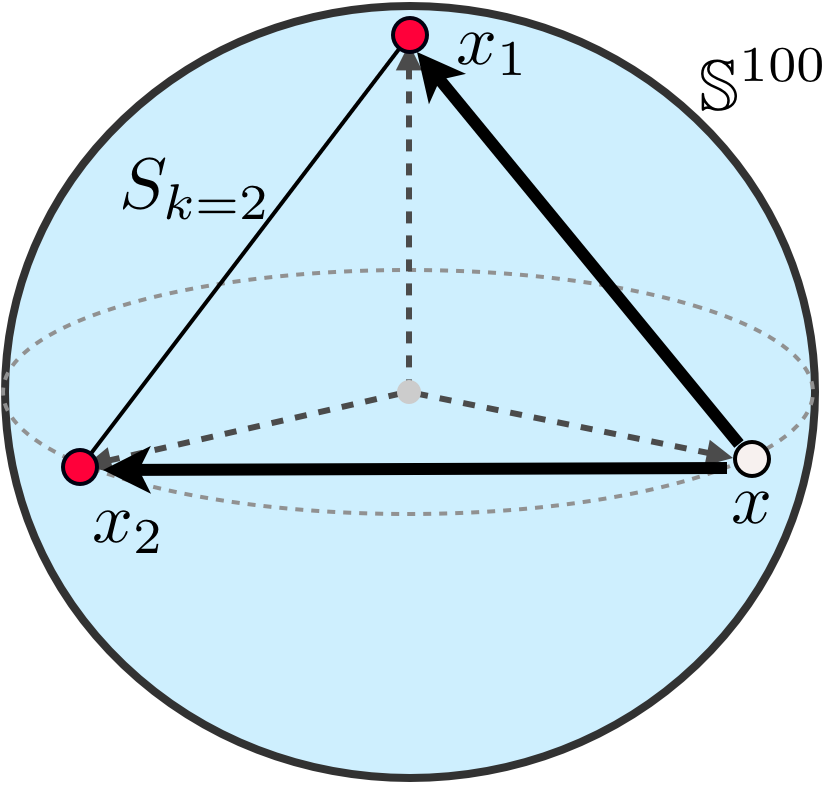}}
\caption{{\small 
The optimal k-means (red) computed from $n=2$ samples drawn uniformly on $\mathbb{S}^{100}$ (blue.) 
For a) $k=1$, the expected squared-distance to a random point $x\in\mathbb{S}^{100}$ is $\EE_\me(S_{k=1})\simeq 1.5$, while for b) $k=2$, 
	it is 
	$\EE_\me(S_{k=2})\simeq 2$. }}
\label{default}
\end{center}
\end{figure}

\begin{ex}
Consider a setup in which $n=2$ samples are drawn from a uniform distribution on the unit $d=100$-sphere, 
	though the argument holds for other $n$ much smaller than $d$. 
Because $d \gg n$, with high probability, the samples are nearly orthogonal: $<x_1,x_2>_\H\simeq 0$, 
while a third sample $x$ drawn uniformly on $\mathbb{S}^{100}$ will also very likely be nearly orthogonal to both $x_1,x_2$~\cite{ledoux2001concentration}. 
The k-means solution on this dataset is clearly $S_{k=1}=\{(x_1+x_2)/2\}$ (Fig~\ref{fig:tradeoffa}). 
Indeed, since $S_{k=2}=\{x_1,x_2\}$ (Fig~\ref{fig:tradeoffb}), 
it is $\EE_\me(S_{k=1})\simeq 1.5 < 2 \simeq \EE_\me(S_{k=2})$ with very high probability. 
In this case, it is better to place a single mean closer to the origin (with $\EE_\me(\{0\})=1$), 
than to place two means at the sample locations.
This example is sufficiently simple that the exact k-means solution is known, but 
the effect can be observed in more complex settings. 
\end{ex}

\section{Main Results}\label{sec:results}

\noindent{\bf Contributions}. 
Our work extends previous results in two different directions:
\begin{enumerate}[(a)]
\item We provide an  analysis of k-means for the case in which the data-generating distribution is supported on 
a manifold embedded in a Hilbert space. In particular, in this setting: 1) we derive new results on the approximation error,
and 2) new sample complexity 
results (learning rates) arising from the choice of $k$ by optimizing the resulting bound. 
We analyze the case in which a solution is obtained
from an approximation algorithm, such as k-means++~\cite{kmpp}, to include this computational error in the bounds. 

\item We generalize the above results from k-means to k-flats, deriving learning rates obtained from new   bounds on both the  statistical
and the approximation errors.  To the best of our knowledge, these results provide the first theoretical analysis  of k-flats in either sense. 
\end{enumerate}

We note that the k-means algorithm has been widely studied in the past, 
and much of our analysis in this case involves the combination of known facts to obtain novel results. However, in the case of k-flats, there is currently no known analysis, and we provide novel results  
	as well as new performance bounds for each of the components in the bounds.

Throughout this section we make the following technical assumption:
\begin{ass}\label{ass0}
$\MM$ is a smooth d-manifold with metric of class $\mathcal{C}^1$, contained in the unit ball in $\H$, and with  volume measure denoted by $\mu_\text{I}$. 
The probability measure   $\rho$ is absolutely continuous with respect to $\mu_\text{I}$, with density $\de$.
\end{ass}
\subsection{Learning Rates for k-Means}
The first result considers the idealized case where we have access to an exact solution for k-means.
\begin{theorem}\label{thkm}
	Under Assumption~\ref{ass0}, if $\Snk$ is a solution of k-means  then, for $0 < \delta < 1$, there are constants $C$ and $\gamma$ dependent only on $d$, and sufficiently large $n'$ such that, 
by setting
	\begin{equation}
		k_n = n^{\frac{d}{2(d+2)}} \cdot \left(\frac{C}{24\sqrt{\pi}}\right)^{d/(d+2)} \cdot \left\{\ds{\int_{\MM} d\mu_{\text{I}}(x) \de(x)^{d/(d+2)} }\right\},
	\end{equation}
	and $\Sn=\S_{n,k_n}$, it is
	\begin{equation}\label{eqKM}
		\PP\left[ \EE_\me(\Sn) \le \gamma \cdot n^{-1/(d+2)} \cdot \sqrt{\ln 1/\delta} \cdot  \left\{\ds{\int_{\MM} d\mu_{\text{I}}(x) \de(x)^{d/(d+2)} }\right\}  \right] \ge 1-\delta ,
	\end{equation}
	for all $n\ge n'$, where $C\sim d/(2\pi e)$ and $\gamma$ grows sublinearly with $d$. 
\end{theorem}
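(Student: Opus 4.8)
The plan is to carry out, for a distribution supported on $\MM$, the empirical-risk-minimization decomposition already sketched in~\eqref{basic}. First I would record the basic split: since $\Snk$ minimizes $\EE_n$ over $\mathcal{S}_k$ while the population minimizer $\Sk\in\mathcal{S}_k$, we have $\EE_n(\Snk)\le\EE_n(\Sk)$, hence
\begin{equation*}
  \EE_\me(\Snk)\;\le\;\EE_\me(\Sk)\;+\;2\sup_{S\in\mathcal{S}_k}\bigl|\EE_n(S)-\EE_\me(S)\bigr| .
\end{equation*}
Since $\MM$ lies in the unit ball and both $\Snk$ (a set of sample barycenters) and an optimal $\Sk$ may be taken inside the closed convex hull of $\MM$, it suffices to restrict the supremum to sets contained in the unit ball. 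Two quantities then remain: the \emph{approximation error} $\EE_\me(\Sk)$ and the \emph{statistical error} (the supremum), and $k$ will be chosen to balance them.

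For the statistical error I would exploit that on the unit ball the map $(x,S)\mapsto d_{_{\H}}^2(x,S)$ is uniformly bounded and Lipschitz in $S$, so a Rademacher-complexity / metric-entropy estimate for $k$-point codebooks in a Hilbert ball, of the type in~\cite{mapo10,OTPOCIHS}, gives $\E\,\sup_{S}\bigl|\EE_n(S)-\EE_\me(S)\bigr|\le c_1\,k/\sqrt n$. A bounded-differences (McDiarmid) concentration inequality then upgrades this, for $n$ large enough that the mean dominates, to: with probability at least $1-\delta$,
\begin{equation*}
  \sup_{S\in\mathcal{S}_k}\bigl|\EE_n(S)-\EE_\me(S)\bigr|\;\le\;c_2\,\frac{k}{\sqrt n}\,\sqrt{\ln(1/\delta)} .
\end{equation*}

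The approximation-error bound is the genuinely new ingredient, and I expect it to be the main obstacle. The target is a manifold analogue of Zador's (Bucklew--Wise) theorem: under Assumption~\ref{ass0}, as $k\to\infty$,
\begin{equation*}
  \EE_\me(\Sk)\;\le\;(1+o(1))\,Q_d\Bigl(\ds\int_{\MM}d\mu_{\text{I}}(x)\,\de(x)^{d/(d+2)}\Bigr)^{(d+2)/d}k^{-2/d},
\end{equation*}
where $Q_d$ is the $d$-dimensional Euclidean quantization coefficient, satisfying $Q_d\sim d/(2\pi e)$ via $Q_d^{-d/2}\asymp\omega_d=\pi^{d/2}/\Gamma(d/2+1)$ and Stirling. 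To prove it I would cover $\MM$ by finitely many coordinate charts; because the metric is $\mathcal{C}^1$ on the compact $\MM$ and the embedding $\MM\hookrightarrow\H$ is smooth, after a linear normalization in each chart the ambient distance $d_{_{\H}}$ is bi-Lipschitz to the Euclidean one with constants tending to $1$ on small cells, while the pushforward of $\me$ has density comparable to $\de$ times the volume Jacobian. Partitioning $\MM$ into such small cells $C_1,\dots,C_N$, applying the Euclidean quantization upper bound inside each cell with $k_j$ points, and allocating $k_j\propto\int_{C_j}\de^{d/(d+2)}\,d\mu_{\text{I}}$ (optimal by H\"older) produces the displayed estimate, with the cell-discretization losses absorbed into the $o(1)$; the delicate part is the \emph{uniform} control of the metric distortion over all cells and the treatment of cell boundaries, so that the surviving constant is exactly the Euclidean $Q_d$ rather than a cruder covering-number constant.

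Finally I would combine the two bounds into
\begin{equation*}
  \EE_\me(\Snk)\;\le\;c_2\,\frac{k}{\sqrt n}\,\sqrt{\ln(1/\delta)}\;+\;(1+o(1))\,Q_d\,I^{(d+2)/d}\,k^{-2/d},\qquad I:=\ds\int_{\MM}d\mu_{\text{I}}(x)\,\de(x)^{d/(d+2)},
\end{equation*}
and minimize the right-hand side over $k>0$; the minimizer is $k^{\star}\asymp n^{d/(2(d+2))}\,(Q_d/c_2)^{d/(d+2)}\,I$, which is precisely the stated $k_n$ once $C$ is identified with a fixed multiple of $Q_d/c_2$ (so that $C\sim d/(2\pi e)$). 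Substituting $k=k_n$ makes both terms of order $n^{-1/(d+2)}\,I\,\sqrt{\ln(1/\delta)}$; collecting the Euclidean quantization constant, the bound on $\omega_d$, and the Rademacher constants into a single $\gamma$, and taking $n'$ large enough that $k_n\ge1$ and the $o(1)$ and concentration corrections are absorbed, yields~\eqref{eqKM}, and tracking the growth of these constants shows that $\gamma$ is sub-linear in $d$.
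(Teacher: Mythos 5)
Your proposal is correct and follows essentially the same route as the paper: the same ERM decomposition $\EE_\me(\Snk)\le 2\sup_{S\in\mathcal{S}_k}|\EE_n(S)-\EE_\me(S)|+\EE^*_{\me,k}$, a Hilbert-space statistical bound of order $k/\sqrt{n}$ with a high-probability $\sqrt{\ln(1/\delta)}$ factor (the paper uses the Maurer--Pontil bound $k\sqrt{18\pi}/\sqrt n+\sqrt{8\ln(1/\delta)/n}$ rather than Rademacher-plus-McDiarmid, which is the same estimate in substance), a manifold Zador-type approximation bound involving $\int_\MM d\mu_{\text{I}}\,\de^{d/(d+2)}$, and the identical balancing choice of $k_n$. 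The only real divergence is in the approximation error: where you propose to re-derive the manifold quantization asymptotics from scratch via charts, bi-Lipschitz cell comparisons and a H\"older allocation of points, the paper simply invokes Gruber's theorem on optimal quantization on Riemannian manifolds (its Theorem~\ref{ThGruber} with $r=2$, plus Zador/B\"or\"oczky asymptotics for $C\sim d/(2\pi e)$), together with the observation that restricting the means to lie on $\MM$ can only increase the error; your chart-based argument is plausible but re-proves a cited result and is where all the delicate uniformity work you flag would reside.
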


\begin{rem}
Note that the distinction between distributions with density in $\MM$, 
and singular distributions is important. 
The bound of Equation~\eqref{eqKM} holds only when the absolutely continuous part of $\me$ over $\MM$ is non-vanishing. 
	the case in which  the distribution is singular over $\MM$ requires a different analysis, and may result in faster convergence rates. 
\end{rem}

%
%
	The following result considers the case where the k-means++ algorithm is used to compute the estimator.

\begin{theorem}\label{thkmpp}
	Under Assumption~\ref{ass0}, if $\Snk$ is the solution of   k-means++ , 
		then for $0<\delta <1$, there are constants $C$ and $\gamma$ that depend only on $d$, and a sufficiently large $n'$ such that,  
by setting
	\begin{equation}
		k_n = n^{\frac{d}{2(d+2)}} \cdot \left(\frac{C}{24\sqrt{\pi}}\right)^{d/(d+2)} \cdot \left\{\ds{\int_{\MM} d\mu_{\text{I}}(x) \de(x)^{d/(d+2)} }\right\},
	\end{equation}
and $\Sn=\S_{n,k_n}$,
it is
	\begin{equation}\label{eqKMPP}
		\PP\left[ \E_Z \EE_\me(\Sn) \le \gamma \cdot n^{-1/(d+2)} \left( \ln n + \ln \|\de\|_{d/(d+2)} \right) \cdot \sqrt{\ln 1/\delta} \cdot  \left\{\displaystyle{\int_{\MM} d\mu_{\text{I}}(x) \de(x)^{d/(d+2)} }\right\}  \right] \ge 1-\delta ,
	\end{equation}
	for all $n\ge n'$, 
	where the expectation is with respect to the random choice $Z$ in the algorithm, and $\|\de\|_{d/(d+2)} = \left\{\displaystyle{\int_{\MM} d\mu_{\text{I}}(x) \de(x)^{d/(d+2)} }\right\}^{(d+2)/d}$, 
	$C\sim d/(2\pi e)$, and $\gamma$ grows sublinearly with $d$. 
\end{theorem}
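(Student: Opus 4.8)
The plan is to reduce Theorem~\ref{thkmpp} to Theorem~\ref{thkm} (more precisely, to the two estimates that drive its proof) by inserting the k-means++ seeding guarantee, which costs only a multiplicative $\ln k_n$ factor. As is standard for empirical quantization, the bound of Theorem~\ref{thkm} is obtained on a single high-probability event $E_\delta$ over the draw of $\X_n$ (with $\PP[E_\delta]\ge 1-\delta$) from: a \emph{statistical} (uniform deviation) estimate $\sup_{S\in\SS_k}|\EE_n(S)-\EE_\me(S)|\le A(k,n,\delta)$, and an \emph{approximation} estimate $\EE_\me(\Sk)\le B(k)$ of Zador type for the population quantizer on $\MM$, the prescribed $k_n$ being the choice that balances $A(k_n,n,\delta)+B(k_n)$. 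I would keep $A$, $B$ and $k_n$ exactly as in Theorem~\ref{thkm}, so in particular $A(k_n,n,\delta)+B(k_n)$ is already known to be $O\!\big(n^{-1/(d+2)}\sqrt{\ln 1/\delta}\,\{\int_\MM d\mu_\text{I}\,\de^{d/(d+2)}\}\big)$ with constants depending only on $d$.

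First I would invoke the Arthur--Vassilvitskii analysis of $D^2$-seeding: k-means++ accesses the data only through squared Hilbert-space distances and through centroids of subsets of $\X_n$ (which lie in $\H$), so the argument of~\cite{kmpp} transfers verbatim to our setting and gives $\E_Z\,\EE_n(\Sn)\le 8(\ln k_n+2)\,\EE_n(\Snk^{\star})$, where $\Snk^{\star}=\argmin_{S\in\SS_{k_n}}\EE_n(S)$ is the exact empirical minimizer and $\E_Z$ is over the internal randomness of the algorithm (a subsequent Lloyd refinement, if any, only decreases the cost and leaves the bound intact).

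Next, working on $E_\delta$, I would chain the elementary, $Z$-independent inequalities: since $\Sn\in\SS_{k_n}$, for every realization of $Z$ one has $\EE_\me(\Sn)\le \EE_n(\Sn)+A(k_n,n,\delta)$; and, since $\Sk\in\SS_{k_n}$, $\EE_n(\Snk^{\star})\le \EE_n(\Sk)\le \EE_\me(\Sk)+A(k_n,n,\delta)\le B(k_n)+A(k_n,n,\delta)$. Taking $\E_Z$ of the first inequality (the deviation term is $Z$-free on $E_\delta$) and substituting the seeding guarantee and then the second chain yields, on $E_\delta$,
\[
\E_Z\,\EE_\me(\Sn)\ \le\ 8(\ln k_n+2)\big(B(k_n)+A(k_n,n,\delta)\big)+A(k_n,n,\delta).
\]
The bracket is the bound of Theorem~\ref{thkm}; it remains to expand the prefactor. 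Using the identity $\{\int_\MM d\mu_\text{I}\,\de^{d/(d+2)}\}=\|\de\|_{d/(d+2)}^{\,d/(d+2)}$ one gets $\ln k_n=\tfrac{d}{2(d+2)}\ln n+\tfrac{d}{d+2}\ln\tfrac{C}{24\sqrt\pi}+\tfrac{d}{d+2}\ln\|\de\|_{d/(d+2)}$, whence $8(\ln k_n+2)\le \gamma'(d)\,(\ln n+\ln\|\de\|_{d/(d+2)})$ for all $n$ beyond some $n'$ (the residual $O(\ln d)+O(1)$ is dominated by $\ln n$); with $C\sim d/(2\pi e)$ the coefficient $\gamma'(d)$ stays bounded in $d$. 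Folding $\gamma'$ and all remaining $d$-dependent constants into a single $\gamma$, which still grows sublinearly in $d$, produces~\eqref{eqKMPP}.

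The main obstacle, and really the only delicate point, is the bookkeeping between the two independent sources of randomness: the $8(\ln k+2)$ factor is a bound \emph{in expectation over $Z$} for the \emph{random} target $\EE_n(\Snk^{\star})$, whereas $A$ and $B$ are controlled only on the high-probability event $E_\delta$ over $\X_n$. The reduction is clean precisely because the deviation functional $\sup_{S\in\SS_k}|\EE_n(S)-\EE_\me(S)|$ does not depend on $Z$, so $\E_Z$ commutes through it and the event $E_\delta$ is preserved unchanged; one must additionally check that the $\ln\|\de\|_{d/(d+2)}$ contribution entering through $\ln k_n$ does not spoil the stated form (it is dominated by $\ln n$ once $n\ge n'$) and that inheriting the Zador constant $C\sim d/(2\pi e)$ keeps $\gamma$ sublinear in $d$.
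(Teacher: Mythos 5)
Your proposal is correct and follows essentially the same route as the paper: the same decomposition and the same choice of $k_n$ as in Theorem~\ref{thkm}, with the Arthur--Vassilvitskii guarantee inserted as a multiplicative $8(\ln k_n+2)$ factor and $\ln k_n$ then expanded to produce the $\left(\ln n+\ln\|\de\|_{d/(d+2)}\right)$ term. Your explicit bookkeeping of the expectation over $Z$ versus the high-probability event over the sample is in fact slightly more careful than the paper's one-line derivation, but the argument is the same.
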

\begin{rem}
In the particular case that $\H=\mathbb{R}^d$ and $\MM$ is contained in the unit ball, 
we may further bound the distribution-dependent part of Equations~\ref{eqKM} and~\ref{eqKMPP}. 
Using H\"older's inequality, 
one obtains
\begin{equation}\label{eq:wd}
\begin{split}
	\displaystyle{\int d\nu(x) \de(x)^{d/(d+2)} } &\le 
		\left[\displaystyle{\int_{\MM} d\nu(x)   \de(x)  }\right]^{d/(d+2)} \cdot \left[\displaystyle{\int_{\MM} d\nu(x)}\right]^{2/(d+2)} \\
		&\le \text{Vol}(\MM)^{2/(d+2)} \le \omega_d^{2/(d+2)}, 
\end{split}
\end{equation}
where $\nu$ is the Lebesgue measure in $\mathbb{R}^d$, and $\omega_d$ is the volume of the $d$-dimensional unit ball. 

It is clear from the proof of Theorem~\ref{thkm} that, in this case, we may choose
\[
	k_n = n^{\frac{d}{2(d+2)}} \cdot \left(\frac{C}{24\sqrt{\pi}}\right)^{d/(d+2)} \cdot \omega_d^{2/d},
\]
independently of the density $\de$, to obtain a 
bound $\EE_\me(\Sn^*) = O\left(  n^{-1/(d+2)}  \cdot \sqrt{\ln 1/\delta}\right)$ with probability $1-\delta$ (and similarly for Theorem~\ref{thkmpp}, except for an additional $\ln n$ term), 
where the constant only depends on the dimension. 
\end{rem}

\begin{rem}
Note that according to the above theorems, choosing $k$ requires knowledge of properties 
of the distribution $\rho$ underlying the data, such as the intrinsic 
dimension of the support. In fact, following the ideas in \cite{stch08} Section 6.3-5,  
it is easy to prove  that choosing $k$ to minimize the reconstruction error  on a hold-out set,
allows to achieve  the same learning rates (up to a logarithmic factor), adaptively in the sense that knowledge of 
properties of $\rho$ are not needed.
\end{rem}

\subsection{Learning Rates for k-Flats}\label{sec:kflats_rates}

To study  k-flats,  we need to slightly strengthen Assumption~\ref{ass0} by adding to it by the following:

\begin{ass}\label{asskf}
Assume   the manifold $\MM$ to have  metric of class $\mathcal{C}^3$, 
	and finite second fundamental form $\text{II}$~\cite{carmo1992riemannian}.
 \end{ass}
One reason for the higher-smoothness assumption is that k-flats uses higher order approximation, whose analysis requires a higher order of differentiability. \\
We begin by providing a result for k-flats on hypersurfaces (codimension one), and next extend it to manifolds in more general spaces.

\begin{theorem}\label{thkf}
Let, $\H=\mathbb{R}^{d+1}$.
Under Assumptions~\ref{ass0},\ref{asskf}, 
if $\Fnk$ is a solution of k-flats,
then there is a constant $C$ that depends only on $d$, and sufficiently large $n'$ such that, by setting 	
	\begin{equation}
		k_n =  n^{\frac{d}{2(d+4)}} \cdot \left(  \frac{C}{2\sqrt{2\pi d}}  \right)^{d/(d+4)}  \cdot \left(\kappa_{_\MM}\right)^{4 / (d+4)} ,
	\end{equation}
	and  $\Fn=\F_{n,k_n}$,
then for all $n\ge n'$ it is
	\begin{equation}\label{eqKF}
		\PP\left[ \EE_\me(\Fn)  \le   2\left(8\pi d\right)^{2/(d+4)} C^{d/(d+4)} \cdot n^{-2/(d+4)}  \cdot \sqrt{\frac{1}{2}\ln 1/\delta} \cdot   \left(\kappa_{_\MM}\right)^{4/(d+4)}   \right] \ge 1-\delta ,
	\end{equation}
where	
	$
		\kappa_{_\MM} :=  
		\mu_{_{|\text{\emph{II}}|}}(\MM) = {\int_{_\MM }  d\mu_{_{\text{\emph{I}}}}(x) |\kappa^{1/2}_G(x)|  }
	$
	 is the total root curvature of $\MM$, $\mu_{|\text{\emph{II}}|}$ is the measure associated with the (positive) second fundamental form, and $\kappa_{_G}$ is the Gaussian curvature on $\MM$.
\end{theorem}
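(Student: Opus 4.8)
The plan is to follow the same three-step scheme used for k-means — an error decomposition, a uniform deviation bound, and an approximation bound — and then optimize over $k$. Writing $\Fk$ for a minimizer of $\EE_\me$ over $\mathcal{F}_k$ (the population k-flats solution) and using that $\Fnk$ minimizes the empirical reconstruction over $\mathcal{F}_k$, so that $\EE_n(\Fnk)\le\EE_n(\Fk)$, one has
\[
\EE_\me(\Fnk)\ \le\ \EE_\me(\Fk)\ +\ 2\sup_{F\in\mathcal{F}_k}\bigl|\EE_n(F)-\EE_\me(F)\bigr|.
\]
Since any flat lying too far from $\MM$ has empirical reconstruction larger than a flat through a sample point, I would restrict the hypothesis class to the (relatively compact) set of $d$-flats meeting a fixed ball around $\MM$; on the unit ball the squared-distance functions $x\mapsto d_\H^2(x,F)$ are then uniformly bounded by some $D^2$, which is what makes both terms tractable.

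For the statistical term I would use McDiarmid's bounded-differences inequality: changing one of the $n$ samples perturbs $\sup_F|\EE_n(F)-\EE_\me(F)|$ by at most $D^2/n$, so this quantity concentrates around its mean with a deviation of order $D^2\sqrt{\tfrac{1}{2}\ln(1/\delta)/n}$ — the origin of the $\sqrt{\tfrac12\ln 1/\delta}$ factor in \eqref{eqKF}. The mean is then controlled by symmetrization together with a Dudley entropy-integral estimate over $\{x\mapsto d_\H^2(x,F):F\in\mathcal{F}_k\}$: in $\H=\mathbb{R}^{d+1}$ a $d$-flat is a hyperplane, hence parametrized by a unit normal and an offset, so the restricted class has metric entropy of order $k(d+1)\log(1/\e)$ at scale $\e$, giving $\E\sup_F|\EE_n(F)-\EE_\me(F)|=O\!\bigl(\sqrt{k(d+1)\log n/n}\bigr)$. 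For $k=k_n=\Theta(n^{d/(2(d+4))})$ both pieces of the statistical term are of smaller order than $n^{-2/(d+4)}$, so past the threshold $n'$ the rate is governed by the approximation term and the statistical contribution is absorbed — accounting for the constant $2$ in \eqref{eqKF}.

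The heart of the proof is the new approximation estimate $\inf_{F\in\mathcal{F}_k}\EE_\me(F)\lesssim k^{-4/d}\,\kappa_{_\MM}^{4/d}$ (up to lower-order terms and constants depending only on $d$). I would build a partition of $\MM$ into $k$ cells — the regions of a \emph{weighted} Dirichlet/Voronoi tessellation — with a point density $\lambda(x)$ to be optimized under $\int_\MM\lambda\,d\mu_\text{I}=k$. Over a cell centered at $p$ and of small diameter, $\MM$ is the graph over the tangent hyperplane $T_p\MM$ of a $\mathcal{C}^3$ function whose Hessian is the second fundamental form $\text{II}_p$ (a single quadratic form in codimension one), so taking the flat to be $T_p\MM$ gives pointwise squared error $\tfrac14\bigl(\text{II}_p(u,u)\bigr)^2+o(|u|^4)$ at tangential displacement $u$. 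Choosing each cell to be (approximately) an ellipsoid aligned with the principal curvature directions with semiaxes $\propto|\kappa_i|^{-1/2}$ makes $\int_{\text{cell}}(\text{II}_p(u,u))^2$ scale like $|\kappa_G(p)|^{2/d}$ times the $(d{+}4)/d$ power of the cell volume; summing, passing to the continuum, and optimizing $\lambda$ by the same Hölder (Zador-type) argument used in the quantization literature yields a bound proportional to $k^{-4/d}\bigl(\int_\MM(\de\,|\kappa_G|^{2/d})^{d/(d+4)}d\mu_\text{I}\bigr)^{(d+4)/d}$, and a final application of Hölder's inequality with $\int_\MM\de\,d\mu_\text{I}=1$ collapses the density and leaves $k^{-4/d}\kappa_{_\MM}^{4/d}$. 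Assumption~\ref{asskf} ($\mathcal{C}^3$ metric, finite $\text{II}$) is precisely what makes the $o(|u|^4)$ Taylor remainder uniformly controllable across cells and $\kappa_{_\MM}$ finite.

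Putting the two bounds into the decomposition and choosing $k_n$ to equate the approximation term with $n^{-2/(d+4)}$ gives $k_n\asymp n^{d/(2(d+4))}$ with the stated constant and the bound \eqref{eqKF}. The step I expect to be the main obstacle is the approximation estimate: constructing the tessellation and carrying out the cell-shape/density optimization so that the geometry reduces exactly to the total root curvature $\kappa_{_\MM}$, while uniformly controlling the quartic Taylor remainder — including near patch boundaries and where some principal curvatures vanish or the cells become highly anisotropic. This is also the ingredient that does not transfer verbatim to higher codimension, where $\text{II}$ is vector-valued and the local $L^2$-optimization must be redone, which is why the hypersurface case is isolated first.
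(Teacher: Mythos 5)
Your overall architecture coincides with the paper's: the decomposition $\EE_\me(\Fnk)\le \EE^*_{\me,k}+2\sup_{F\in\mathcal{F}_k}|\EE_n(F)-\EE_\me(F)|$, a uniform deviation bound for the flat class, an approximation bound of order $k^{-4/d}\kappa_\MM^{4/d}$, and the choice of $k_n$ balancing the two. Your statistical step is a legitimate alternative: the paper proves Lemma~\ref{kfse} via Gaussian complexity and Slepian's lemma (getting $k\sqrt{2\pi d/n}+\sqrt{\ln(1/\delta)/2n}$, with the $\sqrt{\ln 1/\delta}$ term coming from bounded differences inside the Bartlett--Mendelson theorem), whereas your covering-number/Dudley route in $\R^{d+1}$ would give $O(\sqrt{k(d+1)\log n/n})$ plus the same McDiarmid term; either suffices for the stated $k_n$, modulo bookkeeping of the exact constants in~\eqref{eqKF}.

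The genuine gap is the approximation estimate, which you yourself flag as the main obstacle and only sketch. As described, your construction breaks down exactly where the paper has to work: (i) the prescribed ellipsoidal cells with semiaxes $\propto|\kappa_i|^{-1/2}$ degenerate wherever a principal curvature vanishes (flat or parabolic regions), so the tessellation is not defined there; (ii) "summing, passing to the continuum, and optimizing $\lambda$" is not a proof — it is precisely the content of a Zador-type asymptotic quantization theorem, which on a curved domain must be imported or reproved; and (iii) you need every cell to be small enough that the quartic Taylor comparison $d^2_\H(x,T_p\MM)\lesssim(1+\lambda)\,\text{II}$-distance$^4$ holds uniformly, which does not follow from the heuristic cell construction. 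The paper resolves all three at once by a different device: it convexifies the second fundamental form and regularizes it into a genuine Riemannian metric $Q=|\text{II}|+\alpha'\,\text{I}$ (positive definite even where curvature vanishes), takes $P_k$ to be an optimal $k$-point quantizer of $\MM$ for the \emph{fourth-power} distance $d_Q^4$ with weight $\de/\omega_Q$, invokes Gruber's Riemannian quantization theorem (Theorem~\ref{ThGruber} with $r=4$) for the asymptotics, uses the packing/covering property of optimal quantizers plus a Lebesgue-number compactness argument (Corollary~\ref{cor:diam}, Lemma~\ref{lem:cover}) to guarantee each Voronoi region lies in a neighborhood where Lemma~\ref{lem:lambda} applies, applies H\"older to collapse the weight into $\alpha\mathcal{V}_\MM+\kappa_\MM$, and finally lets $\alpha,\lambda\to0$ (with a Weierstrass smoothing step because $|\text{II}|$ need not be $\mathcal{C}^1$). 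Your H\"older collapse to $\kappa_\MM^{4/d}$ is the right endgame, but without a substitute for the regularized metric, the quantization theorem, and the diameter control, the central lemma (the paper's Lemma~\ref{kfae}) remains unproved in your proposal.
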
 


In the more general case of a $d$-manifold $\MM$ (with metric in $\mathcal{C}^3$) embedded in a separable Hilbert space $\H$, we cannot make any assumption on the codimension of $\MM$ (the dimension of the orthogonal complement to the tangent space at each point.)  
In particular, the second fundamental form $\text{II}$, which is an extrinsic quantity describing how the tangent spaces bend locally 
is, at every $x\in\MM$, a map $\text{II}_x: T_x\MM\mapsto\left(T_x\MM\right)^\perp$ 
(in this case of class $\mathcal{C}^1$ by Assumption~\ref{asskf}) 
from the tangent space to its orthogonal complement ($\text{II}(x) := B(x,x)$ in the notation of~\cite[p.\ 128]{carmo1992riemannian}.)
Crucially, in this case, we may no longer assume the dimension of the orthogonal complement $\left(T_x\MM\right)^\perp$ to be finite. \\
Denote by 
$|\text{II}_x| = \sup_{\substack{r\in T_x\MM \\ \|r\|\le 1}} \nor{\text{II}_x(r)}_{_{\H}}$,  the operator norm of $\text{II}_x$. We have:

\begin{theorem}\label{thkfg}
Under Assumptions~\ref{ass0},\ref{asskf}, 
if $\Fnk$ is a solution to the  k-flats problem,
then there is a constant $C$ that depends only on $d$, and sufficiently large $n'$ such that, by setting 	
	\begin{equation}
		k_n =  n^{\frac{d}{2(d+4)}} \cdot \left(  \frac{C}{2\sqrt{2\pi d}}  \right)^{d/(d+4)}  \cdot \kappa_{_\MM}^{4 / (d+4)} ,
	\end{equation}
		and  $\Fn=\F_{n,k_n}$,
then for all $n\ge n'$ it is
	\begin{equation}\label{eqKFG}
		\PP\left[ \EE_\me(\Fn)  \le   2\left(8\pi d\right)^{2/(d+4)} C^{d/(d+4)} \cdot n^{-2/(d+4)}  \cdot \sqrt{\frac{1}{2}\ln 1/\delta} \cdot   \kappa_{_\MM}^{4/(d+4)}   \right] \ge 1-\delta ,
	\end{equation}
where	
	$
		\kappa_{_\MM} :=  
		{\int_{_\MM }  d\mu_{\text{I}}(x)  \text{ } |\text{\emph{II}}_x|^2 }
	$
\end{theorem}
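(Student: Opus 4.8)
\noindent The plan is to follow the error decomposition used for the codimension-one case (Theorem~\ref{thkf}); the only genuinely new ingredient needed here is a local tangent-flat approximation valid for a $d$-manifold of arbitrary, possibly infinite, codimension in $\H$. Fix $k$, let $\Fk^{*}$ denote a reference collection of $k$ $d$-flats to be constructed below, and use that $\Fnk$ minimizes $\EE_n$ over $\mathcal{F}_k$ to write
\[
  \EE_\me(\Fnk)\ \le\ 2\,\sup_{F\in\mathcal{F}_k}\bigl|\EE_n(F)-\EE_\me(F)\bigr|\ +\ \EE_\me(\Fk^{*}) ,
\]
so that everything reduces to bounding a geometric approximation term and a statistical term, after which $k$ is optimized.

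\noindent\textbf{Approximation term.} I would partition $\MM$ into $k$ measurable regions $R_1,\dots,R_k$, each contained in a geodesic ball of some radius $r_i$ about a point $p_i$, choosing the partition so that it is asymptotically equidistributed with respect to a curvature-weighted measure (equidistribution for $\mu_{\text{I}}$ alone gives $r_i=O((\mu_{\text{I}}(\MM)/k)^{1/d})$; refining the partition where $|\text{II}_x|$ is large is what lets the integrated curvature, rather than $\text{Vol}(\MM)$, control the bound). On $R_i$ take $F_i:=p_i+T_{p_i}\MM$, the affine tangent flat at $p_i$. The key local estimate comes from the ambient Taylor expansion of the embedding,
\[
  \exp_{p_i}(v)\ =\ p_i+v+\tfrac12\,\text{II}_{p_i}(v,v)+R_{p_i}(v),\qquad \nor{R_{p_i}(v)}_{\H}\le c\,\nor{v}^{3},
\]
which holds with $c$ \emph{uniform} over $\MM$ by compactness together with Assumption~\ref{asskf} (the $\mathcal{C}^3$ metric forces bounded $\text{II}$ and bounded covariant derivative of the curvature). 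Since $p_i+v\in F_i$, the squared distance of $\exp_{p_i}(v)$ to $F_i$ is at most $\tfrac14\nor{\text{II}_{p_i}(v,v)}_{\H}^2+O(\nor{v}^5)\le\tfrac14|\text{II}_{p_i}|^2\nor{v}^4+O(\nor{v}^5)$. Integrating against $\me$ on each $R_i$, summing, bounding $d^2_{\H}(x,\Fk^{*})\le d^2_{\H}(x,F_i)$ for $x\in R_i$, and passing from the density-weighted curvature integral to the unweighted one by H\"older's inequality exactly as in the remark after Theorem~\ref{thkm}, yields
\[
  \EE_\me(\Fk^{*})\ \le\ C_1(d)\Bigl(\frac{\kappa_{_\MM}}{k}\Bigr)^{4/d},\qquad \kappa_{_\MM}=\int_{\MM}|\text{II}_x|^{2}\,d\mu_{\text{I}}(x) .
\]
This is precisely the step that specializes the hypersurface argument: in codimension one the shape operator acts by scalars on principal directions, and an ellipsoidal model of each Voronoi cell recovers the sharper total root curvature $\int|\kappa_G^{1/2}|$, whereas here only the operator norm $|\text{II}_x|$ is available and $\int|\text{II}_x|^2$ results.

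\noindent\textbf{Statistical term.} Here I would argue as in the proof of Theorem~\ref{thkf}. Since $\MM$ lies in the unit ball, it suffices to consider $d$-flats meeting that ball, on which $x\mapsto d^2_{\H}(x,F)$ is bounded by an absolute constant $B$; hence $\X_n\mapsto\sup_{F\in\mathcal{F}_k}|\EE_n(F)-\EE_\me(F)|$ has bounded differences $B/n$, and McDiarmid's inequality gives, with probability at least $1-\delta$,
\[
  \sup_{F\in\mathcal{F}_k}\bigl|\EE_n(F)-\EE_\me(F)\bigr|\ \le\ \E\,\sup_{F\in\mathcal{F}_k}\bigl|\EE_n(F)-\EE_\me(F)\bigr|+B\sqrt{\tfrac{1}{2n}\ln\tfrac{1}{\delta}} .
\]
A symmetrization and covering-number estimate for the class $\{x\mapsto d^2_{\H}(x,F):F\text{ a }d\text{-flat}\}$ — in which $F$ enters only through a $d$-dimensional orthogonal projection and a bounded offset, so its complexity is governed by a function of $d$ and \emph{not} by $\dim\H$ — then bounds the expectation by $\sqrt{k\,c_2(d)/n}$, the extra $\sqrt{k}$ accounting for the minimum over the $k$ flats.

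\noindent\textbf{Choice of $k$ and conclusion.} Setting $k=k_n$ as in the statement makes the approximation term $\asymp n^{-2/(d+4)}\kappa_{_\MM}^{4/(d+4)}$; at that value, using $d\ge1$, both $\sqrt{k_n/n}$ and $n^{-1/2}$ are $O(n^{-2/(d+4)})$, so the statistical term is dominated by the approximation term for all $n\ge n'$, and collecting the constants — the Zador-type $C\sim d/(2\pi e)$ entering $C_1(d)$ together with the McDiarmid confidence factor — gives~\eqref{eqKFG}, including the leading $2$ and the explicit constant $2(8\pi d)^{2/(d+4)}C^{d/(d+4)}$. I expect the main obstacle to be the approximation lemma itself: making the Taylor remainder uniform over a compact $\mathcal{C}^3$ manifold embedded in a possibly infinite-dimensional Hilbert space, and designing the partition so that its radii adapt to $|\text{II}_x|$ and exactly the integral $\int_{\MM}|\text{II}_x|^2\,d\mu_{\text{I}}$ — not $\text{Vol}(\MM)$ times a pointwise curvature bound — appears, with the correct power of $k$.
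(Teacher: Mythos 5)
Your decomposition into a uniform statistical deviation plus an approximation term is exactly the paper's (Equation~\ref{eqsplit2}), and your local estimate $d^2_{\H}(x,T_p\MM)\lesssim \tfrac14|\text{II}_p|^2\,d_{\text{I}}^4(x,p)$ with a uniform remainder is the content of the paper's Lemma~\ref{lem:lambdag} (proved via Monge patches and compactness). The genuine gap is the step you yourself flag as the obstacle: producing $k$ tangent flats whose total error is $C(d)\,\bigl(\kappa_{\MM}/k\bigr)^{4/d}$ with the \emph{integrated} curvature $\int_{\MM}|\text{II}_x|^2 d\mu_{\text{I}}$ as the constant. "Equidistribute with respect to a curvature-weighted measure, refining where $|\text{II}_x|$ is large" is a statement of the goal, not a construction; to get the $k^{-4/d}$ rate with the constant $\bigl(\int_{\MM}(\tfrac14\de|\text{II}|^2)^{d/(d+4)}d\mu_{\text{I}}\bigr)^{(d+4)/d}$ (which then yields $\kappa_{\MM}^{4/d}$ by the H\"older step you describe) one needs cells adapted to the weight with exponent $d/(d+4)$, and proving that such partitions exist with matching constants is precisely the hard part. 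The paper does not build them by hand: it takes $P_k$ minimizing the weighted fourth-order quantization functional of Equation~\ref{eqOQ4g} (geodesic distance $d_{\text{I}}$, weight $\tfrac14\de(x)|\text{II}_x|^2$), invokes Gruber's asymptotic quantization theorem (Theorem~\ref{ThGruber}, $r=4$), and uses the associated packing/covering property together with a Lebesgue-number argument (Lemma~\ref{lem:cover}) to guarantee that every Voronoi cell eventually lies in a neighborhood where the local estimate applies. This reduction to optimal-quantization asymptotics is the missing idea; note also that the resulting approximation bound is only asymptotic in $k$, which is where the "sufficiently large $n'$" in the statement comes from and which your sketch does not account for.

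The statistical term as you propose it also has a flaw in this setting: in an infinite-dimensional $\H$ the class of $d$-flats does not admit finite \emph{uniform} covering numbers (rank-$d$ projections onto orthonormal directions are pairwise at operator distance $1$), so a classical symmetrization-plus-covering argument "governed by $d$, not $\dim\H$" does not go through as stated; you would need empirical (data-dependent) covering numbers or, as the paper does in Lemma~\ref{kfse}, a direct Gaussian-complexity computation using $d^2_{\H}(x,\pi x)=\|x\|^2-\langle xx^t,\pi\rangle_F$, Slepian's lemma, and $\sup_\pi\|\pi\|_F=\sqrt d$, which yields the explicit bound $k\sqrt{2\pi d/n}+\sqrt{\ln(1/\delta)/(2n)}$. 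Your claimed $\sqrt{k\,c_2(d)/n}$ dependence is not substantiated (the paper only proves a bound linear in $k$), and since the stated $k_n$ and the constant $2(8\pi d)^{2/(d+4)}C^{d/(d+4)}$ in Equation~\ref{eqKFG} arise from exactly balancing $2k\sqrt{2\pi d/n}$ against $C(\kappa_{\MM}/k)^{4/d}$, you cannot "collect the constants" to recover the theorem without first establishing a statistical bound of that specific form.
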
 

	Note that the better k-flats bounds stem from the higher approximation power of $d$-flats over points. 
	Although this greatly complicates the setup and proofs, as well as the analysis of the constants, 
		the resulting bounds are of order $O\left(n^{-2/(d+4)}\right)$, compared with the slower order $O\left(n^{-1/(d+2)}\right)$ of k-means.
%



\subsection{Discussion}



%


In all the results, the final performance does not depend on the dimensionality of the embedding space (which in fact can be infinite), but only 
on the intrinsic dimension of the space on which the data-generating distribution is defined. 
%
The key to these results is an approximation construction in which the Voronoi regions on the manifold (points closest to a given mean or flat) 
	are guaranteed to have vanishing diameter in the limit of $k$ going to infinity. 
Under our construction, 
a hypersurface is approximated efficiently by tracking the variation of its tangent spaces by using the second fundamental form. Where this form vanishes, the Voronoi regions of an approximation will not be ensured to have vanishing diameter with $k$ going to infinity, unless certain care is taken in the analysis. 

An important point of interest is that the approximations are controlled by averaged quantities, such as the total root curvature (k-flats for surfaces of codimension one), total curvature (k-flats in arbitrary codimensions), and $d/(d+2)$-norm of the probability density (k-means), 
	which are integrated over the domain where the distribution is defined. 
Note that these types of quantities have been linked to provably tight approximations in certain cases, such as for convex manifolds~\cite{GruberConvexI,enets}, in contrast with worst-case methods that place a constraint on a maximum curvature, or minimum injectivity radius (for instance~\cite{Maggioni2011,Hari}.) 
Intuitively, it is easy to see that a constraint on an average quantity may be arbitrarily less restrictive than one on its maximum. 
A small difficult region (e.g.\ of very high curvature) may cause the bounds of the latter to substantially degrade, 
	while the results presented here would not be adversely affected so long as the region is small. 

Additionally, care has been taken throughout to analyze the behavior of the constants. In particular, there are no constants in the analysis that grow exponentially with the dimension, and in fact, many have polynomial, or slower growth. We believe this to be an important point, since this ensures that the asymptotic bounds do not hide an additional exponential dependence on the dimension.

\bibliographystyle{plain}
{\small \bibliography{BIBReconstructionNips2012v3G}}


\appendix

\section{Methodology and Derivation of Results}\label{sec:derivation}

Although both k-means and k-flats optimize the same empirical risk, 
the performance measure we are interested in is that of Equation~\ref{EErho}. We may bound it from above as follows: 
\begin{eqnarray} \label{eqsplit}
	\EE_\me(\Snk) &\le |\EE_\me(\Snk) - \EE_n(\Snk)| + 
					\EE_n(\Snk) - \EE_n(S^*_k) + |\EE_n(S^*_k) - \EE^*_{\me,k}| + \EE^*_{\me,k} \\
		&\le  2 \cdot \underbrace{ \displaystyle{\sup_{S\in {\cal S}_k} |\EE_\me(S) - \EE_n(S)|}}_{\text{Statistical error}} + \underbrace{\EE^*_{\me,k}}_{\text{Approximation error}} \label{eqsplit2}
\end{eqnarray}
where $\EE^*_{\me,k} := \inf_{S\in {\cal S}_k} \EE_\me(S)$  is the best attainable 
performance over  ${\cal S}_k$, and $S^*_k$ is a set for which the best performance is attained. 
Note that $\EE_n(\Snk) - \EE_n(S^*_k) \le 0$ by the definition of $\Snk$. 
The same error decomposition can be considered for k-flats, by replacing $\Snk$ 
by $\Fnk$ and  ${\cal S}_k$ by ${\cal F}_k$.


Equation~\ref{eqsplit} decomposes the total learning error into two terms: a uniform (over all sets in the class $C_k$) bound on the difference between the empirical, and true error measures, 
	and an \emph{approximation error} term. The uniform statistical error bound will depend on the samples, and thus may hold with a certain probability. 

In this setting, the approximation error will typically tend to zero as the class $C_k$ becomes larger (as $k$ increases.) 
Note that this is true, for instance, if 
	$C_k$ is the class of discrete sets of size $k$, as in the k-means problem. 

The performance of Equation~\ref{eqsplit} is, through its dependence on the samples, a random variable. 
We will thus set out to find probabilistic bounds on its performance, as a function of the number $n$ of samples, and the size $k$ of the approximation.  
	By choosing the approximation size parameter $k$ to minimize these bounds, we obtain performance bounds 
	as a function of the sample size. 

\section{K-Means}\label{sec:KM}

We use the above decomposition to derive sample complexity bounds for the performance of the k-means algorithm.
To derive explicit bounds on the different error terms we  have to combine in a novel way  some previous results 
and  some new observations.\\

\noindent{\bf Approximation error}. 
The error $\EE^*_{\me,k} = \inf_{\Sk\in\SS_k} \EE_\me(\Sk)$ is related to the problem of optimal quantization. 
The classical optimal quantization problem is quite well understood, going back to the fundamental work of~\cite{Zador,FejesT} on optimal quantization for data transmission, and more recently by the work of~\cite{GrafLushgyMonograf,gruber2007convex,GruberOQ,Chernaya}. 
In particular, it is known that, for distributions  with finite moment of order $2+\lambda$, for some $\lambda>0$, it is~\cite{GrafLushgyMonograf}
\begin{equation}\label{km:ae}
	\displaystyle{\lim_{k\rightarrow\infty}  \EE^*_{\me,k} \cdot k^{2/d}} = 
	C  \left\{\displaystyle{\int d\nu(x) \de_a(x)^{d/(d+2)} } \right\}^{(d+2)/d}
\end{equation}
where $\nu$ is the Lebesgue measure, $\de_a$ is the density of the absolutely continuous part of the distribution (according to its Lebesgue decomposition), and $C$ is a constant that depends only on the dimension. 
Therefore, the approximation error decays \emph{at least} as fast as $k^{-2/d}$. 


We note that, by setting $\mu$ to be the uniform distribution over the unit cube $[0,1]^d$, it clearly is
\[ \displaystyle{\lim_{k\rightarrow\infty} \EE^*_{\mu,k} \cdot k^{2/d} } = C \]
and thus, by making use of Zador's asymptotic formula~\cite{Zador}, and combining it with a result of B\"or\"oczky (see~\cite{gruber2007convex}, p.\ 491), 
	we observe that $C \sim \left(d / (2\pi e)\right)^{r/2}$ with $d\rightarrow\infty$, for the $r$-th order quantization problem. 
In particular, this shows that the constant $C$ only depends on the dimension, and, in our case  ($r=2$), has only linear growth in $d$, a fact that will be used in the sequel.

The approximation error $\EE^*_{\me,k} = \inf_{\Sk\in\SS_k} \EE_\me(\Sk)$ of k-means 
is  related to the problem of optimal quantization on manifolds, for which some results 
are known~\cite{GruberOQ}. 
By calling $\EE^*_{\MM,\de,k}$ the approximation error only among sets of means contained in $\MM$, Theorem~\ref{ThGruber} in Appendix ~\ref{sec:KF},  implies in this case (letting $r=2$) that 
\begin{equation}\label{eq:aekm}
	\lim_{k\rightarrow\infty}\EE^*_{\me,k}\cdot  k^{2/d} = C \left\{ \ds{  \int_\MM d\mu_{_\text{I}}(x) \text{ } \de(x)^{d/(d+2)}} \right\}^{(d+2)/d} 
\end{equation}
where $\de$ is absolutely continuous over $\MM$ and, 
by replacing $\MM$ with a $d$-dimensional domain in $\mathbb{R}^d$, it is clear that the constant $C$ is the same as above. 

Since restricting the means to be on $\MM$ cannot decrease the approximation error, it is $\EE^*_{\me,k} \le \EE^*_{\MM,\de,k}$, and therefore the right-hand side of Equation~\ref{eq:aekm} provides an (asymptotic) upper bound to $\EE^*_{\me,k} \cdot k^{2/d}$.

For the statistical error we use available bounds.\\
\noindent{\bf Statistical error}. 
The statistical error of Equation~\ref{eqsplit}, which uniformly bounds the difference between the empirical, and expected  error, has been widely-studied in recent years in the literature~\cite{mapo10,Hari,Bartlett98theminimax}. 
In particular, it has been shown that, for a distribution $\de$ over the unit ball in $\mathbb{R}^d$, it is
\begin{equation}\label{km:se}
	\displaystyle{\sup_{\S\in\SS_k} |\EE_\me(\S) - \EE_n(\S)|} \le \frac{k\sqrt{18\pi}}{\sqrt{n}} + \sqrt{\frac{8 \ln 1/\delta }{ n }}
\end{equation}
with probability $1-\delta$~\cite{mapo10}. 
Clearly, this implies convergence $\EE_n(\S) \rightarrow \EE_\me(\S)$ almost surely, as $n\rightarrow\infty$; although this latter result was proven earlier in~\cite{PollardKMC}, under the less restrictive condition that $p$ have finite second moment. 
\\
By bringing together the above results, we obtain the bound in Theorem~\ref{thkm} on the performance of k-means, whose proof is postponed to Appendix A. \\

Further, we can consider the error incurred by the actual optimization algorithm used to compute the k-means solution.\\
\noindent{\bf Computational error}. 
In practice, the k-means problem is NP-hard~\cite{kmNP3,kmNP2,kmNP1}, with  the original Lloyd relaxation algorithm providing no guarantees of closeness to the global minimum of Equation~\ref{EEemp}. 
However, practical approximations, such as the k-means++ algorithm~\cite{kmpp}, exist. 
When using k-means++, means are inserted one by one at samples selected with probability proportional to their squared distance to the set of previously-inserted means. 
This randomized seeding has been shown  by~\cite{kmpp} to output a set that is, in expectation, within a  $8\left(\ln k + 2\right)$-factor of the optimal. 
Once again, by combining these results, we obtain Theorem~\ref{thkmpp}, 
whose proof is also in 
Appendix A. 

We use the results discussed in Section~\ref{sec:derivation} to obtain the proof of Theorem~\ref{thkm} as follows. 

\begin{proof}
Letting $\|\de\|_{d/(d+2)} := \left\{ \ds{\int d\mu_{\text{I}}(x) \de(x)^{d/(d+2)} } \right\}^{(d+2)/d}$, then 
with probability $1-\delta$, it is
\begin{equation}\label{eq:derivkm}
\begin{split}
	\EE_\me(\Snk) &\le 2 n^{-1/2} \left( k\sqrt{18\pi} + \sqrt{8\ln 1/\delta}\right) + C k^{-2/d}  \cdot\|\de\|_{d/(d+2)} \\
		&\le  2 n^{-1/2} k\sqrt{18\pi} \cdot \sqrt{8\ln 1/\delta} +  C k^{-2/d} \cdot\|\de\|_{d/(d+2)} \\
		&=   24\sqrt{\pi} k n^{-1/2} \sqrt{\ln 1/\delta} + C k^{-2/d} \cdot\|\de\|_{d/(d+2)} \\
		&= 2  \sqrt{\ln 1/\delta}  n^{-1/(d+2)}  C^{d/(d+2)} \left(24\sqrt{\pi}\right)^{2/(d+2)} \cdot \left\{ \ds{\int d\mu_{\text{I}}(x) \de(x)^{d/(d+2)} } \right\}
\end{split}
\end{equation}
where the parameter
\begin{equation}\label{kmkn}
	k_n = n^{\frac{d}{2(d+2)}} \cdot \left(\frac{C}{24\sqrt{\pi}}\right)^{d/(d+2)} \cdot \left\{\ds{\int d\mu_{\text{I}}(x) \de(x)^{d/(d+2)} }\right\}  
\end{equation}
has been chosen to balance the summands in the third line of Equation~\ref{eq:derivkm}. 
\end{proof}

\noindent The proof of Theorem~\ref{thkmpp} follows a similar argument.

\begin{proof}
In the case of Theorem~\ref{thkmpp}, the additional multiplicative term $A_k = 8(\ln k + 2)$ corresponding to the computational error incurred by the k-means++ algorithm does not affect the choice of parameter $k_n$ since both summands in the third line of Equation~\ref{eq:derivkm} are multiplied by $A_k$ in this case. Therefore, we may simply use the same choice of $k_n$ as in Equation~\ref{kmkn} in this case to obtain
\begin{equation}\label{eq:derivkmpp}
\begin{split}
	\E_Z \EE_\me(\Snk) &\le 2 n^{-1/2} \left( k\sqrt{18\pi} + \sqrt{8\ln 1/\delta}\right) + C k^{-2/d}  \cdot\|\de\|_{d/(d+2)} \cdot 8(\ln k+2) \\
		&\le 16  \sqrt{\ln 1/\delta}  n^{-1/(d+2)}  C^{d/(d+2)} \left(24\sqrt{\pi}\right)^{2/(d+2)} \cdot \left\{ \ds{\int d\mu_{\text{I}}(x) \de(x)^{d/(d+2)} } \right\} \\
		& \cdot \left[ 2 + \frac{d}{d+2} \left(  \frac{1}{2} \ln n + \ln \frac{C}{12\sqrt\pi} + \ln \| \de\|_{d/(d+2)} \right) \right]
\end{split}
\end{equation}
with probability $1-\delta$, where the expectation is with respect to the random choice $Z$ in the algorithm. 
From this the bound of Theorem~\ref{thkmpp} follows. 
\end{proof}

\section{K-Flats}\label{sec:KF}

Here we state a series of lemma that we prove in the next section.
For the k-flats problem, we begin by introducing a uniform bound on the difference between empirical (Equation~\ref{EEemp}) 
and expected risk (Equation~\ref{EErho}.) 

\begin{lemma}\label{kfse}
If $\mathcal{F}_k$ is the class of sets of $k$ $d$-dimensional affine spaces then, 
	with probability $1-\delta$ on the sampling of $\X_n\sim p$, it is
\[ 	\ds{\sup_{\X'\in\mathcal{F}_k} |\EE_\me(\X') -  \EE_{n}(\X')} | \le  k \sqrt{\frac{2\pi d}{n}} + \sqrt{\frac{\ln 1/\delta}{2n}} \]
\end{lemma}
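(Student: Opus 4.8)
\noindent\emph{Proof idea.} The plan is the standard symmetrization route for uniform deviation bounds: control the expected supremum by a Rademacher complexity, then upgrade to high probability via the bounded differences inequality. Two preliminary reductions make this work. Since $\MM$ lies in the unit ball, I would first restrict $\mathcal{F}_k$ to collections of flats each meeting the closed unit ball; this is harmless for the downstream use of the lemma (at the optimum each flat of $\Fnk$ passes through the empirical mean of its Voronoi cell, hence lies in the ball, and an analogous reduction holds for the population optimum), and it is in fact necessary, since a $d$-flat far from the origin can make $|\EE_\me-\EE_n|$ arbitrarily large. Second, for such a flat $V=a+L$ with $L$ a $d$-dimensional linear subspace and $a=P_V(0)\perp L$, $\|a\|\le1$, I would use the identity $d_{\H}^2(x,V)=\|x\|^2-\|P_Lx\|^2-2\langle x,a\rangle+\|a\|^2$, whence $d_{\H}^2(x,X')=\|x\|^2-\max_{j\le k}\phi_{V_j}(x)$ with $\phi_V(x):=\|P_Lx\|^2+2\langle x,a\rangle-\|a\|^2$.

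Writing $\Phi:=\sup_{X'\in\mathcal{F}_k}|\EE_\me(X')-\EE_n(X')|$, the loss $d_{\H}^2(x,X')$ is bounded (each flat meets the unit ball), so replacing one sample changes $\Phi$ by $O(1/n)$; McDiarmid's inequality then gives $\Phi\le\E\Phi+c\,\sqrt{\ln(1/\delta)/(2n)}$ with probability $1-\delta$, supplying the second term of the bound. Classical symmetrization gives $\E\Phi\le 2R_n(\mathcal{G}_k)$, the expected empirical Rademacher complexity of $\mathcal{G}_k=\{x\mapsto d_{\H}^2(x,X'):X'\in\mathcal{F}_k\}$; since $\|x\|^2$ does not depend on $X'$ it drops out, leaving $R_n(\mathcal{G}_k)=R_n(\{\max_{j\le k}\phi_{V_j}\})$.

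Next I would peel off the $k$-fold maximum: from $\max(s,t)=\tfrac12(s+t+|s-t|)$ and the contraction lemma applied to the $1$-Lipschitz map $|\cdot|$ one gets, by induction on $k$, $R_n(\{\max_{j\le k}\phi_{V_j}\})\le k\,R_n(\{\phi_V\})$. For a single flat I would split $\phi_V$ into its three summands. The affine and scalar parts are cheap: $R_n(\{2\langle\cdot,a\rangle:\|a\|\le1\})=\tfrac2n\E_\sigma\|\sum_i\sigma_ix_i\|\le 2/\sqrt n$, and the $-\|a\|^2$ term also contributes $O(1/\sqrt n)$. The essential term is $\|P_Lx\|^2=\langle P_L,\,xx^{*}\rangle_{\mathrm{HS}}$, for which
\[
 R_n(\{\|P_L\cdot\|^2\})=\E_\sigma\sup_L\Bigl\langle P_L,\;\tfrac1n\textstyle\sum_i\sigma_i x_ix_i^{*}\Bigr\rangle_{\mathrm{HS}}=\E_\sigma\bigl[\text{sum of the }d\text{ largest eigenvalues of }M_\sigma\bigr],
\]
with $M_\sigma=\tfrac1n\sum_i\sigma_ix_ix_i^{*}$. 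By Ky Fan's inequality and Cauchy--Schwarz this is $\le\sqrt d\,\E_\sigma\|M_\sigma\|_{\mathrm{HS}}\le\sqrt d\,(\E_\sigma\|M_\sigma\|_{\mathrm{HS}}^2)^{1/2}$, and $\E_\sigma\|M_\sigma\|_{\mathrm{HS}}^2=\tfrac1{n^2}\sum_i\|x_i\|^4\le 1/n$ since $\|x_i\|\le1$, so $R_n(\{\|P_L\cdot\|^2\})\le\sqrt{d/n}$. Collecting, $R_n(\{\phi_V\})\le\sqrt{d/n}+O(1/\sqrt n)$, hence $R_n(\mathcal{G}_k)\le k\sqrt{d/n}+O(k/\sqrt n)$; tracking the constants through symmetrization and absorbing the lower-order $O(k/\sqrt n)$ terms into the leading one yields the term $k\sqrt{2\pi d/n}$, which together with the McDiarmid term gives the claim.

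The main obstacle is the projection term $\|P_Lx\|^2$: handled coordinate by coordinate over an orthonormal basis of $L$ it would cost a factor $d$ rather than $\sqrt d$, and crude matrix-concentration bounds would bring in $\dim\H$. The trick is to treat the entire $d$-dimensional subspace at once via the Hilbert--Schmidt/Ky-Fan estimate above, which is genuinely dimension-free in the ambient space. Secondary care is needed for the a priori restriction to flats meeting the unit ball (to keep $\|a\|$, and hence the loss, bounded) and for carrying out the $k$-fold maximum reduction with only a linear-in-$k$ loss.
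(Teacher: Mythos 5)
Your proposal is correct in substance and reaches the bound by a genuinely different route from the paper, although the decisive quantitative step is the same: both arguments treat the whole $d$-dimensional projection at once, bounding $\sup_\pi\langle \pi, \sum_i\epsilon_i x_ix_i^t\rangle_{F}$ by $\|\pi\|_{F}\,\E\|\sum_i \epsilon_i x_ix_i^t\|_{F}\le \sqrt d\cdot\sqrt n$, which is what makes the bound independent of the ambient dimension. Where you differ: the paper works with Gaussian complexity and disposes of the $k$-fold minimum by Slepian's comparison lemma (comparing the increments of the min-process to those of the sum-process), then converts back to Rademacher complexity with the factor $\sqrt{\pi/2}$ — this is exactly where its constant $\sqrt{2\pi d}$ comes from — and obtains the high-probability statement by citing Bartlett--Mendelson; you instead stay with Rademacher complexity, peel the $k$-fold maximum via $\max(s,t)=\tfrac12(s+t+|s-t|)$ plus contraction (also linear in $k$), and use symmetrization plus McDiarmid directly. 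A further genuine difference is that you treat honestly affine flats, with the offset term $2\langle x,a\rangle-\|a\|^2$ and an a priori restriction to flats meeting the unit ball; the paper's proof instead writes $\X'$ as a union of $d$-dimensional subspaces and uses $d^2_{_\H}(x,\pi x)=\|x\|^2-\|\pi x\\|^2$, which is valid only for subspaces through the origin, so your observation that some such restriction is needed (and harmless at the optimum) is a real point in favor of your treatment, not a detour.

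One caveat: your final step, absorbing the $O(k/\sqrt n)$ contributions of the affine terms into the leading term to recover exactly $k\sqrt{2\pi d/n}$, does not quite go through for small $d$ — symmetrization already costs a factor $2$, so your leading constant is $2\sqrt d$ plus an additive constant from the offset terms, which exceeds $\sqrt{2\pi d}$ unless $d$ is large. The statement you actually prove is of the form $c_1 k\sqrt{d/n}+c_2 k/\sqrt n+\sqrt{\ln(1/\delta)/(2n)}$, which is the same rate in $k$, $d$, $n$ and entirely sufficient for the way Lemma~\ref{kfse} is used in Theorems~\ref{thkf} and~\ref{thkfg}, but with a slightly different numerical constant than the one displayed in the lemma.
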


By combining the above result with 
approximation error bounds, we may produce performance bounds on the expected risk for the k-flats problem, 
	with appropriate choice of parameter $k_n$. 
We distinguish between the codimension one hypersurface case, and the more general case of a smooth manifold $\MM$ embedded in a Hilbert space. 
We begin with an approximation error bound for hypersurfaces in Euclidean space.

\begin{lemma}\label{kfae}
Assume given $\MM$ smooth with metric of class $\mathcal{C}^3$ in $\mathbb{R}^{d+1}$.
	If $\mathcal{F}_k$ is the class of sets of $k$ $d$-dimensional affine spaces, and $\EE^*_{\me,k}$ is the minimizer of Equation~\ref{EErho} over $\mathcal{F}_k$, then there is a constant $C$ that depends on $d$ only, such that
	\begin{equation*}
		\ds{\lim_{k\rightarrow\infty} \EE^*_{\me,k} \cdot k^{4/d} } \le C  \cdot \left(\kappa_{\MM}\right)^{4/d}   
	\end{equation*}
	where $\kappa_\MM := \mu_{|\text{II}|}(\MM)$ is the total root curvature of $\MM$,  and $\mu_{|\text{II}|}$ is the measure associated with the (positive) second fundamental form.
	The constant $C$ grows as $C\sim \left( d / (2\pi e)\right)^2$ with $d\rightarrow\infty$. 
\end{lemma}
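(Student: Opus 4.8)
The plan is to prove the bound by an explicit construction: for each $k$ I would exhibit a collection of $k$ affine $d$-flats whose expected reconstruction error is at most $(1+o(1))\,C\,\kappa_\MM^{4/d}\,k^{-4/d}$, and since $\EE^*_{\me,k}$ is the infimum of $\EE_\me$ over $\mathcal{F}_k$, the asserted bound (read as a $\limsup$) follows on letting $k\to\infty$. The flats will be the tangent planes $T_{x_1}\MM,\dots,T_{x_k}\MM$ at $k$ base points, and the crux is to place the base points so that the region of $\MM$ assigned to each is a thin \emph{anisotropic} cell, elongated in directions of small principal curvature and short in directions of large principal curvature. Since $d(x,\bigcup_i T_{x_i}\MM)\le d(x,T_{x_i}\MM)$ whenever $x$ lies in cell $i$, the error of this configuration is bounded by $\sum_i\int_{\mathrm{cell}_i}\de(x)\,d^2(x,T_{x_i}\MM)\,d\mu_\mathrm{I}(x)$.

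For the local model: by Assumptions~\ref{ass0} and~\ref{asskf}, near any $x\in\MM$ the manifold is the graph of a $\mathcal{C}^3$ function over $T_x\MM\cong\mathbb{R}^d$ whose quadratic part is the form of the second fundamental form $\mathrm{II}_x$; writing $\mathrm{II}_x=\mathrm{diag}(\lambda_1(x),\dots,\lambda_d(x))$ in principal coordinates, a point at tangential displacement $u$ sits at normal distance $\tfrac12 u^\top\mathrm{II}_x u+O(|u|^3)$ from $T_x\MM$, so for a small cell $\Omega$ around $x$ one has $\int_\Omega\de\,d^2(\cdot,T_x\MM)\,d\mu_\mathrm{I}=(1+o(1))\,\de(x)\int_\Omega(\tfrac12 u^\top\mathrm{II}_x u)^2\,du$, the $o(1)$ controlled by the cell diameter via the $\mathcal{C}^3$ bound. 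Taking $\Omega$ to be the ellipsoid with semi-axes $a_j\propto|\lambda_j(x)|^{-1/2}$ and prescribed $\mu_\mathrm{I}$-volume $w$, and rescaling $v_j=u_j/a_j$, turns $\tfrac12 u^\top\mathrm{II}_x u$ into a fixed quadric on the unit ball, and a direct computation gives $\int_\Omega(\tfrac12 u^\top\mathrm{II}_x u)^2\,du=c_d\,\omega_d^{-4/d}\,|\det\mathrm{II}_x|^{2/d}\,w^{1+4/d}$, where $\omega_d$ is the volume of the Euclidean unit $d$-ball and $c_d$ is an explicit $d$-dependent constant that stays bounded as $d\to\infty$. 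Since $|\det\mathrm{II}_x|=|\kappa_G(x)|$ in codimension one, the integrand $|\kappa_G(x)|^{1/2}$ of $\kappa_\MM$ arises precisely here — this is the step where using anisotropic rather than round cells replaces the crude $|\mathrm{II}_x|$ by the total root curvature.

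To globalize I would use a Zador/Gersho-type allocation (the same flavor as the classical optimal-quantization asymptotics). Fix a cell-density $\lambda\ge0$ on $\MM$ with $\int_\MM\lambda\,d\mu_\mathrm{I}=1$, tile $\MM$ into $k$ cells of the above shape with $\mu_\mathrm{I}$-volume $\approx(k\lambda(x))^{-1}$, and use the tangent flats at the cell centers. Summing the per-cell bounds and letting $k\to\infty$ (the sums are Riemann-type sums converging to integrals, using Lebesgue differentiation for the $L^1$ density) shows this configuration has expected error at most $(1+o(1))\,c_d\,\omega_d^{-4/d}\,k^{-4/d}\int_\MM \de(x)\,|\det\mathrm{II}_x|^{2/d}\,\lambda(x)^{-4/d}\,d\mu_\mathrm{I}(x)$. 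Optimizing over $\lambda$ by Lagrange multipliers gives $\lambda\propto(\de\,|\det\mathrm{II}|^{2/d})^{d/(d+4)}$ and value $\big(\int_\MM \de^{d/(d+4)}|\det\mathrm{II}_x|^{2/(d+4)}d\mu_\mathrm{I}\big)^{(d+4)/d}$; then Hölder's inequality with conjugate exponents $\tfrac{d+4}{d},\tfrac{d+4}{4}$, using $\int_\MM\de\,d\mu_\mathrm{I}=1$ (this is where absolute continuity of $\me$ enters), bounds the bracket by $\big(\int_\MM|\det\mathrm{II}_x|^{1/2}d\mu_\mathrm{I}\big)^{4/d}=\kappa_\MM^{4/d}$. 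This gives the lemma with $C=c_d\,\omega_d^{-4/d}$, and since $\omega_d^{1/d}\sim\sqrt{2\pi e/d}$ by Stirling, $C=O\big((d/(2\pi e))^2\big)$ with no exponential-in-$d$ factor, as claimed.

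The hard part is making the localization rigorous uniformly over $\MM$ while handling the set where the curvature degenerates: where some $\lambda_j(x)=0$ the anisotropic cell wants unbounded aspect ratio, and the optimal density $\lambda$ vanishes there although the manifold still needs covering, so both the per-cell identity and the allocation break down. The remedy is to regularize — replace each $|\lambda_j(x)|$ by $|\lambda_j(x)|\vee\eta$ in the cell shape and in $\lambda$ — run the argument for $\eta>0$, and let $\eta\to0$, using that the nondegenerate regions carry all of the $k^{-4/d}$-order error and that the regularized curvature integral decreases to $\kappa_\MM$ by dominated convergence. The remaining points — existence of the prescribed anisotropic mesh on a compact $\mathcal{C}^3$ hypersurface with smoothly varying principal-direction field away from umbilic/degenerate points, uniformity of the $\mathcal{C}^3$ Taylor remainder, the $L^1$ Riemann-sum convergence, and attainment of the infimum $\EE^*_{\me,k}$ so the construction genuinely upper-bounds it — are routine under Assumptions~\ref{ass0}--\ref{asskf}.
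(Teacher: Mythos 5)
Your proposal is correct in substance and reaches the same bound with the same structural skeleton (local quadratic model, anisotropy governed by $\det|\text{II}|$, a density/H\"older step producing the total root curvature $\int_\MM |\kappa_G|^{1/2}d\mu_{\text{I}}$, and an $\eta$-regularization where the curvature degenerates), but it takes a genuinely different route from the paper. You re-derive the quantization asymptotics by hand in Gersho--Zador style: explicit anisotropic cells with semi-axes $\propto|\lambda_j|^{-1/2}$, a cell-density allocation optimized by Lagrange multipliers, and Riemann-sum limits. The paper instead reduces everything to a citable theorem of Gruber on $r$-th order quantization on Riemannian manifolds: it introduces the auxiliary metric $Q=|\text{II}|+\alpha'\,\text{I}$ (the convexified second fundamental form, regularized exactly in the spirit of your $\eta$-floor), applies Gruber's asymptotics with $r=4$ to the weighted error $\int d\mu_Q\,(\de/\omega_Q)\min_p d_Q^4(\cdot,p)$, and converts geodesic $d_Q$-distance into distance-to-tangent-plane via the comparison lemma $d_\H^2(x,T_y\MM)\le(1+\lambda)\,d_{|\text{II}|}^4(x,y)$, together with a Lebesgue-number/covering argument guaranteeing that for large $k$ every $d_Q$-Voronoi cell fits inside a neighborhood where that comparison is valid; H\"older and $\mu_Q(\MM)\le\alpha\mathcal{V}_\MM+\kappa_\MM$ then finish, with a Weierstrass smoothing of $Q$ because $|\text{II}|$ need not be $\mathcal{C}^1$. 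What each buys: your construction makes the origin of $|\det\text{II}_x|^{2/d}$ (and hence of the root curvature rather than a worst-case curvature bound) completely transparent and keeps the constant explicit; the paper's route outsources precisely the steps you label ``routine'' --- building an actual partition of $\MM$ into cells of prescribed anisotropic shape (ellipsoids do not tile, and your per-cell constant in fact depends on the signature of $\text{II}_x$, so it is an upper bound rather than an identity), continuity issues of the principal frame at umbilic/crossing points, uniformity of the Taylor remainder, and the $L^1$ limit --- to Gruber's theorem, and these are the genuinely delicate parts of a hand-rolled argument, not afterthoughts. If you wanted to make your version fully rigorous you would either have to do that tiling work (e.g.\ with anisotropic boxes, which tile and only change $c_d$ by a bounded factor) or, as the paper does, quote the optimal-quantization machinery; either way the claimed growth $C=\Theta\left((d/(2\pi e))^2\right)$ survives.
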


For the more general problem of approximation of a smooth manifold in a separable Hilbert space, we begin by
considering the definitions in Section~\ref{sec:results} 
the second fundamental form $\text{II}$ and its operator norm $|\text{II}_q|$ at a point $q\in\MM$. The we have:

\begin{lemma}\label{kfaepp}
Assume given a $d$-manifold $\MM$ with metric in $\mathcal{C}^3$ embedded in a separable Hilbert space $\H$. 
	If $\mathcal{F}_k$ is the class of sets of $k$ $d$-dimensional affine spaces, and $\EE^*_{\me,k}$ is the minimizer of Equation~\ref{EErho} over $\mathcal{F}_k$, then there is a constant $C$ that depends on $d$ only, such that

	\begin{equation*}
		\ds{ \lim_{k\rightarrow\infty} \EE^*_{\me,k} \cdot k^{4/d} } \le  C  \cdot \left(\kappa_{\MM}\right)^{4/d}
	\end{equation*}
where	
	$
		\kappa_{\MM} :=  
		{\int_{_\MM }  d\mu_{\text{I}}(x)  \text{ } \frac 1 4 |\text{II}_x|^2 }
	$
	and $\mu_{\text{I}}$ is the volume measure over $\MM$.  
	The constant $C$ grows as $C\sim \left( d / (2\pi e)\right)^2$ with $d\rightarrow\infty$. 
\end{lemma}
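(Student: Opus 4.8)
The plan is to reduce the general separable-Hilbert-space case to the codimension-one estimate of Lemma~\ref{kfae} by working pointwise on the manifold. First I would fix $q\in\MM$ and look at the second-order local model of $\MM$ near $q$: in normal coordinates on $T_q\MM$, the deviation of $\MM$ from its tangent plane $T_q\MM$ is, to leading order, the $\left(T_q\MM\right)^\perp$-valued quadratic form $\text{II}_q$. The crucial observation is that, although the orthogonal complement $\left(T_q\MM\right)^\perp$ may be infinite-dimensional, the \emph{image} of the quadratic map $r\mapsto \text{II}_q(r)$ for $r$ in the unit ball of $T_q\MM$ spans a subspace of dimension at most $d(d+1)/2$; more importantly, for the purpose of computing squared distances to an optimally placed $d$-flat, only the scalar quantity $\|\text{II}_q(r)\|_\H$ enters, exactly as the single principal-curvature form did in the hypersurface case. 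So the local approximation problem is governed by $|\text{II}_q|$ in the same way that, in Lemma~\ref{kfae}, it was governed by $|\kappa_G^{1/2}|$, with the correspondence $|\kappa_G^{1/2}(q)| \leftrightarrow \tfrac14|\text{II}_q|^2$ after accounting for the squaring conventions.

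Next I would carry out the same covering/quantization construction used to prove Lemma~\ref{kfae}: partition $\MM$ into $k$ Voronoi-type regions whose diameters shrink like $k^{-1/d}$, place on each region the best-fitting $d$-flat (the $d$-truncated PCA/affine least-squares solution), and bound the local reconstruction error on a region of linear size $\varepsilon$ by $c\,\varepsilon^{4}\,|\text{II}_x|^2$ plus lower-order terms coming from the $\mathcal{C}^3$ control on the metric (this is where Assumption~\ref{asskf}, giving a $\mathcal{C}^1$ second fundamental form, is needed so that $|\text{II}_x|$ is continuous and the remainder is genuinely of higher order uniformly on the compact $\MM$). Then I would optimize the allocation of the $k$ regions: by a Hölder / Jensen argument, the total error is minimized, asymptotically, by making the measure of each region proportional to a power of $|\text{II}_x|^{2}$, which produces exactly the factor $\left(\int_\MM d\mu_{\text I}(x)\,\tfrac14|\text{II}_x|^2\right)^{4/d} = \kappa_\MM^{4/d}$ in the limit, with a dimensional constant $C$ matching the one from the quantization theorem (Theorem~\ref{ThGruber}), whence $C\sim (d/(2\pi e))^2$ as $d\to\infty$. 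Finally I would note that regions where $\text{II}_x$ vanishes must be handled separately — as flagged in the Discussion — by either a direct estimate (a flat of higher order of contact does even better there) or by a limiting argument letting the exponent approach its value, so that such regions contribute negligibly.

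The main obstacle I expect is the infinite-codimension geometry: one must show that the optimal local $d$-flat, and the associated quadratic remainder bound, behave uniformly even though $\left(T_x\MM\right)^\perp$ is not finite-dimensional. Concretely, the issue is controlling the third-order term in the Taylor expansion of the embedding in a way that is uniform over $\MM$ when the ambient space is infinite-dimensional, and ensuring that the best-fit affine subspace (PCA) of a small patch stays close to $T_x\MM$ with error controlled only by intrinsic/extrinsic data independent of the embedding dimension. I would address this by phrasing everything in terms of the operator norm $|\text{II}_x|$ and the $\mathcal{C}^3$-bound on the metric — both of which are embedding-independent, finite on the compact $\MM$ — and by observing that a small enough geodesic ball is bi-Lipschitz to a ball in $\R^d$ with constants tending to $1$, so that the whole analysis takes place, up to negligible corrections, inside the finite-dimensional span of $T_x\MM$ together with the (at most $d(d+1)/2$-dimensional) image of $\text{II}_x$, reducing it to a bounded-codimension problem to which the Lemma~\ref{kfae} machinery applies verbatim.
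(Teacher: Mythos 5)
Your plan is, in its operative steps, the same as the paper's proof: a pointwise fourth-order bound of the form $d_{\H}^2(x,T_y\MM)\le(1+\lambda)\,\tfrac14\,d_{\mathrm I}^4(x,y)\,|\text{II}_x|^2$, valid on neighborhoods of uniform size (compactness of $\MM$ plus continuity of $\text{II}$, via a uniform Monge-patch radius, cf.\ Lemma~\ref{lem:lambdag}); then an optimal-quantization allocation obtained by applying Theorem~\ref{ThGruber} with $r=4$ to the weighted functional $\int_\MM d\mu_{\mathrm I}(x)\,\tfrac14\de(x)|\text{II}_x|^2\min_{p\in P_k}d_{\mathrm I}^4(x,p)$, together with the Lebesgue-number argument of Lemma~\ref{lem:cover} so that every Voronoi region eventually lies in a neighborhood where the pointwise bound holds; and finally H\"older's inequality to extract $\kappa_\MM^{4/d}$ and the dimensional constant $C\sim(d/2\pi e)^2$. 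Two differences are harmless: the paper takes the flats to be the tangent spaces $T_p\MM$ at the optimal quantization points rather than per-region best-fit (PCA) flats (either choice upper-bounds $\EE^*_{\me,k}$), and it never needs your observation that the image of $\text{II}_q$ spans a space of dimension at most $d(d+1)/2$, since the operator norm $|\text{II}_x|$ alone carries the argument. Your explicit caution about regions where $\text{II}$ vanishes is well placed, and is resolved precisely by the formulation you describe: the quantization is done with respect to the non-degenerate intrinsic distance $d_{\mathrm I}$, with the curvature relegated to the weight.

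The one claim you should drop is that the problem ``reduces to the codimension-one machinery verbatim'' under a correspondence $|\kappa_G^{1/2}(q)|\leftrightarrow\tfrac14|\text{II}_q|^2$. In Lemma~\ref{kfae} the quantization is carried out in the modified metric $Q=|\text{II}|+\alpha'\,\mathrm I$, and the invariant that emerges is the determinant-type density $d\mu_{|\text{II}|}/d\mu_{\mathrm I}=|\kappa_G|^{1/2}$ (a geometric mean of principal curvatures), not an operator norm; that construction has no analogue when $\text{II}$ is normal-bundle valued, and the two invariants do not map onto one another, so the hypersurface lemma cannot be transplanted as such. What does carry over is exactly the part of the machinery your second paragraph uses directly — the local $\varepsilon^4|\text{II}_x|^2$ estimate, Gruber's theorem in the intrinsic metric with a weight, and H\"older — which is how the paper argues; with that framing corrected, your proposal matches the paper's proof.
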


We combine these two results into Theorems~\ref{thkf} and~\ref{thkfg}, 
whose derivation is in Appendix B.

\subsection{Proofs}

We begin proving  the bound on the statistical error given in Lemma~\ref{kfse}.
\begin{proof}
We begin by finding uniform upper bounds on the difference between Equations~\ref{EErho} and~\ref{EEemp} for the class 
$\mathcal{F}_k$ of sets of $k$ $d$-dimensional affine spaces. 
To do this, we will first bound the Rademacher complexity $\mathcal{R}_n(\mathcal{F}_k, \de)$ of the class $\mathcal{F}_k$. 

Let $\Phi$ and $\Psi$ be Gaussian processes indexed by $\mathcal{F}_k$, and defined by
\begin{equation}\begin{split}
	 \Phi_{\X'} &= \ds{\sum_{i=1}^{n} \gamma_i \min_{j=1}^{k} d_{_{\H}}^2(x_i, \pi'_j x_i) }  \\
	 \Psi_{\X'} &= \ds{\sum_{i=1}^{n} \gamma_i \sum_{j=1}^{k} d_{_{\H}}^2(x_i, \pi'_j x_i) } 
\end{split}\end{equation}
$\X'\in\mathcal{F}_k$,  $\X'$ is the union of $k$ $d$-subspaces: $\X'=\cup_{j=1}^k F_j$, where each $\pi'_j$ is an orthogonal projection onto $F_j$, and  $\gamma_i$ are independent Gaussian sequences of zero mean and unit variance. 

Noticing that $d_{_{\H}}^2(x, \pi x) = \|x\|^2 - \|\pi x\|^2 = \|x\|^2 - \left< x x^t, \pi \right>_{_F}$ for any orthogonal projection $\pi$ (see for instance~\cite{kPCAbounds}, Sec. 2.1), 
	where $\left<\cdot,\cdot\right>_{_F}$ is the Hilbert-Schmidt  inner product, 
we may verify that:
\begin{equation}\begin{split}\label{eq:slepian}
	 \E_\gamma \left( \Phi_{\X'} - \Phi_{\X''} \right)^2 &= \ds{\sum_{i=1}^n \left[ \min_{j=1}^k \|x_i\|^2 - \left<x_i x_i^t, \pi'_j\right>_{_F}  -  \left(\min_{j=1}^k \|x_i\|^2 - \left<x_i x_i^t, \pi''_j\right>_{_F} \right) \right]^2} \\ 
	 	&\le  \ds{\sum_{i=1}^n  \max_{j=1}^k \left(   \left< x_i x_i^t, \pi'_j\right>_{_F}  -  \left< x_i x_i^t, \pi''_j\right>_{_F}  \right)^2 } \\
		&\le  \ds{\sum_{i=1}^n  \sum_{j=1}^k \left(   \left< x_i x_i^t, \pi'_j\right>_{_F}  -  \left< x_i x_i^t, \pi''_j\right>_{_F}  \right)^2 }  = \E_\gamma \left(\Psi_{\X'} - \Psi_{\X''} \right)^2
\end{split}\end{equation}

Since it is,
\begin{equation}\begin{split}\label{eq:gc}
	\E_{\gamma} \ds{\sup_{\X'\in\mathcal{F}_k} \sum_{i=1}^n \gamma_i \sum_{j=1}^k \left< x_i x_i^t, \pi'_j \right>_{_F} } &= 
				\E_{\gamma} \ds{\sup_{\X'\in\mathcal{F}_k}  \sum_{j=1}^k \left< \sum_{i=1}^n \gamma_i x_i x_i^t, \pi'_j \right>_{_F} } \\
				&\le k \E_{\gamma} \ds{\sup_{\pi} \left< \sum_{i=1}^n \gamma_i x_i x_i^t, \pi \right>_{_F} } \\
					&\le k \text{ } \ds{\sup_{\pi} \|\pi\|_{_F} } \E_\gamma \|\ds{\sum_{i=1}^n \gamma_i x_i x_i^t }\|_{_F} \le k \sqrt{d n}
\end{split}\end{equation}
we may bound the Gaussian complexity $\Gamma_n(\mathcal{F}_k, \de)$ as follows:
\begin{equation}\begin{split}
	\Gamma_n(\mathcal{F}_k, \de) &= \frac{2}{n} \E_\gamma  \ds{\sup_{\X'\in\mathcal{F}_k} \sum_{i=1}^n \gamma_i \min_{j=1}^k d_{_{\H}}^2(x_i, \pi'_j x_i) } \\
			&\le \frac{2}{n} \E_\gamma  \ds{\sup_{\X'\in\mathcal{F}_k} \sum_{i=1}^n \gamma_i \sum_{j=1}^k  \left< x_i x_i^t, \pi'_j \right>_{_F} }  \le 2 k \sqrt{\frac{d}{n}}
\end{split}\end{equation}
where the first inequality follows from Equation~\ref{eq:slepian} and Slepian's Lemma~\cite{Slepian}, and the second from Equation~\ref{eq:gc}. 

Therefore the Rademacher complexity is bounded by
\begin{equation}\begin{split}
	\mathcal{R}_n(\mathcal{F}_k, \de) &\le \sqrt{\pi/2} \Gamma_n(\mathcal{F}_k, \de) \le k \sqrt{\frac{2 \pi d}{n}}
\end{split}\end{equation}

Finally, by Theorem 8 of~\cite{BartlettRG}, it is:
\begin{equation}\begin{split}
	\ds{\sup_{\X'\in\mathcal{F}_k} |\EE_\me(\X') -  \EE_{n}(\X')} | \le   \mathcal{R}_n(\mathcal{F}_k, \de)  + \sqrt{\frac{\ln 1/\delta}{2 n}} \le  k \sqrt{\frac{2 \pi d}{n}} + \sqrt{\frac{\ln 1/\delta}{2 n}}
\end{split}\end{equation}
as desired. 
\end{proof}

\subsection{Approximation Error}

In order to prove approximation bounds for the k-flats problem, 
	we will begin by first considering the simpler setting of a smooth $d$-manifold in $\mathbb{R}^{d+1}$ space 
	(codimension $1$), and later we will extend the analysis to the general case.

\subsection*{Approximation Error: Codimension One}

Assume that it is $\H=\mathbb{R}^{d+1}$ with the natural metric, and $\MM$ is a compact, smooth $d$-manifold with metric of class $\mathcal{C}^2$. 
Since $\MM$ is of codimension one, the second fundamental form at each point is a map from the tangent space to the reals. 
Assume given $\alpha > 0$ and $\lambda > 0$. 
At every point $x\in\MM$, define the metric $Q_x := |\text{II}_x| + \alpha'(x) \text{I}_x$, where
\begin{itemize}
	\item[a)] $\text{I}$ and $\text{II}$ are, respectively, the first and second fundamental forms on $\MM$~\cite{carmo1992riemannian}. 
	\item[b)] $|\text{II}|$ is the \emph{convexified} second fundamental form, whose eigenvalues are those of $\text{II}$ but in absolute value. If the second fundamental form $\text{II}$ is written in coordinates (with respect to an orthonormal basis of the tangent space) 
	as $S \Lambda S^T$, with $S$ orthonormal, and $\Lambda$ diagonal, 
		then $|\text{II}|$ is $S |\Lambda| S^T$ in coordinates.
		Because $|\text{II}|$ is continuous and positive semi-definite, it has an associated measure $\mu_{|\text{II}|}$
		(with respect to the volume measure $\mu_{\text{I}}$.) 
	\item[c)]  $\alpha'(x) > 0$ is chosen such that $d\mu_{_{Q_x}}/d\mu_{\text{I}} = d\mu_{|\text{II}|}/d\mu_{\text{I}} + \alpha$. 
	Note that such $\alpha'(x) > 0$ always exists since: 
		\begin{itemize}
			\item[$\cdot$] $\alpha'(x)=0$ implies  $d\mu_{_{Q_x}}/d\mu_{\text{I}} = d\mu_{|\text{II}|}/d\mu_{\text{I}}$, and
			\item[$\cdot$] $d\mu_{_{Q_x}}/d\mu_{\text{I}}$ 
			can be made arbitrarily large by increasing $\alpha'(x)$. 
		\end{itemize}
		and therefore there is some intermediate value of $\alpha'(x)>0$ that satisfies the constraint. 
\end{itemize}
In particular, from condition c), it is clear that $Q$ is everywhere positive definite. 

Let $\mu_\text{I}$ and $\mu_{_Q}$ be the measures over $\MM$, associated with $\text{I}$ and $Q$. 
Since, by its definition, $\mu_{\text{II}}$ is absolutely continuous with respect to $\text{I}$, then so must $Q$ be.
Therefore, we may define 
\[	
	\omega_{_Q} := d\mu_{_Q} / d\mu_\text{I}
\]
to be the density of $\mu_{_Q}$ with respect to $\mu_{\text{I}}$. 

Consider the discrete set $P_k\subset\MM$ of size $k$ that minimizes the quantity 
\begin{equation}\label{eqOQ4}
	f_{Q,\de}(P_k) =  \displaystyle{ \int_{_\MM} d\mu_{_Q}(x) \left[ \frac{\de(x)}{\omega_{_Q}(x)} \right] \min_{p\in P_k} d^4_{_{_Q}}(x,p)  }
\end{equation}
among all sets of $k$ points \emph{on} $\MM$. 
$f_{Q,\de}(P_k)$ is the (fourth-order) quantization error over $\MM$, with metric $Q$, and with respect to a weight function $\de/\omega_{_Q}$. 
Note that, in the definition of $f_{Q,\de}(P_k)$, it is crucial that the measure ($\mu_{_{Q}}$), and distance ($d_{_{Q}}$) match, in the sense that $d_{_Q}$ is the geodesic distance with respect to the metric $Q$, 
	whose associated measure is $\mu_{_Q}$.

The following theorem, adapted from~\cite{GruberOQ}, characterizes the relation between $k$ and the quantization error $f_{Q,\de}(P_k)$ on a Riemannian manifold. 

\begin{theorem}\emph{[\cite{GruberOQ}]}\label{ThGruber}
	Given a smooth compact Riemannian $d$-manifold $\MM$ with metric $Q$ of class $\mathcal{C}^1$, and a continuous function $w:\MM\rightarrow\RR^+$, then
	\begin{equation}\label{GruberOQ}
		 \ds{  \min_{P\in\mathcal{P}_k}  \int_{\MM} d\mu_{_Q}(x) w(x)  \min_{p\in P}  d^r_{_Q}(x,p)  }  \sim  C  \left\{ \ds{  \int_\MM d\mu_{_Q}(x) w(x)^{d/(d+r)} } \right\}^{(d+r)/d} \cdot k^{-r/d}
	\end{equation}
	as $k\rightarrow\infty$, 
	where the constant $C$ depends only on $d$. 
	
	Furthermore, for each connected $\MM$, there is a number $\xi>1$ such that each set $P_k$ that minimizes Equation~\ref{GruberOQ} is a $\left(k^{-1/d} / \xi\right)$-packing and $\left(\xi  k^{-1/d}\right)$-cover of $\MM$, with respect to $d_{_Q}$. 
\end{theorem}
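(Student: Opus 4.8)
The plan is to deduce the asymptotics from the classical Euclidean quantization theorem of Zador (in the refined form of Bucklew--Wise; see also~\cite{GrafLushgyMonograf}) by a localization argument that uses only that the metric $Q$ is of class $\mathcal{C}^1$, and to obtain the packing/covering statement by perturbing an optimal configuration. Write $e_k$ for the minimum on the left-hand side of~\eqref{GruberOQ}. Fix $\epsilon>0$. Since $\MM$ is compact and the coefficients of $Q$ are continuous, I would partition $\MM$ into finitely many Borel pieces $\MM_1,\dots,\MM_N$, each lying in a single coordinate chart and small enough that on $\MM_i$ both the geodesic distance $d_Q$ and the measure $\mu_Q$ are comparable, with ratios in $[(1+\epsilon)^{-1},1+\epsilon]$, to a Euclidean metric and the corresponding Lebesgue measure in that chart. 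On each $\MM_i$, Zador's theorem then gives that the optimal $k_i$-point weighted quantization error of order $r$ equals, up to a factor tending to $1$ with $\epsilon$, $C_d\bigl(\int_{\MM_i} w^{d/(d+r)}\,d\mu_Q\bigr)^{(d+r)/d}\,k_i^{-r/d}$, where $C_d$ is the Euclidean Zador constant for dimension $d$ and order $r$ (equivalently, the constant for the uniform distribution on the unit cube), which depends only on $d$; this identifies the constant $C$ in the statement with $C_d$.

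Next, I would match an upper and a lower bound. For the upper bound, allocate $k_i$ points to $\MM_i$ and minimize $\sum_i C_d\bigl(\int_{\MM_i} w^{d/(d+r)}\,d\mu_Q\bigr)^{(d+r)/d}k_i^{-r/d}$ subject to $\sum_i k_i\le k$; a Lagrange-multiplier (equivalently H\"older) computation yields the continuous optimum $k_i\propto\int_{\MM_i} w^{d/(d+r)}\,d\mu_Q$ and optimal value $C_d\bigl(\int_\MM w^{d/(d+r)}\,d\mu_Q\bigr)^{(d+r)/d}k^{-r/d}$, and rounding the $k_i$ to integers costs only a factor $1+o(1)$ since $N$ is fixed. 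For the lower bound, bound the error of an arbitrary $k$-point set $P$ from below piecewise, using a standard boundary correction (restrict to the subset of $\MM_i$ at $Q$-distance $>h$ from $\partial\MM_i$, on which any point of $P$ lying outside a fixed neighbourhood of $\MM_i$ contributes a definite amount, then let $h\to0$), apply the Euclidean lower bound on each piece, and minimize over $\{k_i\}$ with $\sum_i k_i=k$; this reproduces the same constant. Letting $\epsilon\to0$ along a refining sequence of partitions closes the gap and yields~\eqref{GruberOQ}.

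Finally, for connected $\MM$ I would get the two radii by perturbing an optimal $P_k$. If $d_Q(x_0,P_k)=D$ for some $x_0$, adding a point at $x_0$ decreases the error by at least a constant times $D^{d+r}$ (the $Q$-ball of radius $D/4$ about $x_0$ is then far from $P_k$ and of $\mu_Q$-volume $\gtrsim D^{d}$ by the $\mathcal{C}^1$ comparison); comparing with $e_k-e_{k+1}$, which is of order $k^{-(d+r)/d}$, forces $D\le\xi k^{-1/d}$, i.e.\ $P_k$ is a $\xi k^{-1/d}$-cover. Since a $\Delta$-cover makes every Voronoi cell have $Q$-diameter $\le 2\Delta$, if two points of $P_k$ were at distance $\delta$ then deleting one increases the error by at most a constant times $\Delta^{d+r-1}\delta$; comparing again with $e_{k-1}-e_k\asymp k^{-(d+r)/d}$ forces $\delta\ge k^{-1/d}/\xi$ after enlarging $\xi$, which is the packing property.

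I expect the main obstacle to be precisely this last part: both comparisons rely on the regularity of the sequence $m\mapsto e_m$ (that $e_m-e_{m+1}\asymp m^{-(d+r)/d}$, equivalently the weak equidistribution of the empirical measures of optimal configurations towards the probability measure with $\mu_Q$-density proportional to $w^{d/(d+r)}$, which is bounded away from $0$ and $\infty$ on the compact connected $\MM$), and establishing that equidistribution is the technically heaviest ingredient of~\cite{GruberOQ}. The adaptation to the present weighted, merely-$\mathcal{C}^1$ setting is then routine, since the local Euclidean comparisons above require only continuity of the metric coefficients, and the weight $w$ enters exactly as in the classical Zador estimate.
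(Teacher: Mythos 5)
First, note that the paper does not prove this statement at all: Theorem~\ref{ThGruber} is quoted (in adapted form) from~\cite{GruberOQ}, so there is no internal proof to compare yours against; your sketch can only be judged on its own terms. Its first half — partitioning $\MM$ into chart pieces on which $d_Q$ and $\mu_Q$ are $(1+\epsilon)$-comparable to Euclidean distance and Lebesgue measure, invoking the weighted Zador/Bucklew--Wise asymptotics on each piece, optimizing the allocation of points by a H\"older (Lagrange) computation, and matching a lower bound with a boundary correction — is the standard route to the asymptotics~\eqref{GruberOQ} and is acceptable as a sketch, with the constant correctly identified as the Euclidean Zador constant depending only on $d$ (and $r$).

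The genuine gap is in the packing/covering half. Writing $e_k$ for the optimal error, your argument compares $P_k$ with $P_k\cup\{x_0\}$ and with $P_k$ minus one point, and therefore needs two increment estimates: $e_k-e_{k+1}=O\bigl(k^{-(d+r)/d}\bigr)$ for the cover, and $e_{k-1}-e_k\ge c\,k^{-(d+r)/d}$ for the packing. Neither follows from the first-order asymptotics $e_k\sim A\,k^{-r/d}$: that statement controls individual increments neither from above nor from below (monotonicity alone gives only $e_k-e_{k+1}=O(k^{-r/d})$, which in your covering comparison yields $D=O\bigl(k^{-r/(d(d+r))}\bigr)$, far weaker than the claimed $k^{-1/d}$). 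You flag this yourself, but deferring it as ``the technically heaviest ingredient of~\cite{GruberOQ}'' means the second assertion of the theorem is not actually established by your argument — and that assertion is exactly what the paper later relies on (via Corollary~\ref{cor:diam} and Lemma~\ref{lem:cover}) to control Voronoi diameters. One can avoid increment regularity by comparing $P_k$ with another $k$-point configuration (delete the point of $P_k$ whose removal is cheapest, reinsert it at an uncovered $x_0$, and use optimality to bound $D^{d+r}$ by the minimal removal cost), but bounding that removal cost by $O\bigl(k^{-(d+r)/d}\bigr)$ again requires knowing that the cells of optimal configurations have comparable mass and diameter — precisely the equidistribution machinery you set aside. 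So as written, the proposal plausibly recovers the asymptotic formula but does not prove the packing/cover property.
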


This last result, which shows that a minimizing set $P_k$ of size $k$ must be a $\left(\xi  k^{-1/d}\right)$-cover, clearly implies, by the definition of Voronoi diagram and the triangle inequality, the following key corollary. 

\begin{corollary}\label{cor:diam}
	Given $\MM$, there is $\xi > 1$ such that each set $P_k$ that minimizes Equation~\ref{GruberOQ} has Voronoi regions of diameter no larger than $2\xi k^{-1/d}$, as measured by the distance $d_{_Q}$. 
\end{corollary}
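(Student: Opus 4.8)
The plan is to deduce Corollary~\ref{cor:diam} directly from the covering property asserted in the second part of Theorem~\ref{ThGruber}, using only the definition of a Voronoi diagram and the triangle inequality. Recall that Theorem~\ref{ThGruber} guarantees a number $\xi > 1$ (depending on $\MM$) such that every minimizing set $P_k$ of Equation~\ref{GruberOQ} is a $(\xi k^{-1/d})$-cover of $\MM$ with respect to $d_{_Q}$; that is, for every $x \in \MM$ there exists $p \in P_k$ with $d_{_Q}(x,p) \le \xi k^{-1/d}$.

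First I would fix a minimizing set $P_k$ and an arbitrary Voronoi region $V_p$, namely the set of points $x \in \MM$ whose closest point in $P_k$ (with respect to $d_{_Q}$) is $p$. Take any two points $x, y \in V_p$. By the cover property, each of them is within $d_{_Q}$-distance $\xi k^{-1/d}$ of \emph{some} point of $P_k$; but because $x$ and $y$ lie in $V_p$, the nearest point of $P_k$ is $p$ itself, so in fact $d_{_Q}(x,p) \le \xi k^{-1/d}$ and $d_{_Q}(y,p) \le \xi k^{-1/d}$. The triangle inequality for the geodesic distance $d_{_Q}$ then gives
\[
	d_{_Q}(x,y) \le d_{_Q}(x,p) + d_{_Q}(p,y) \le 2\xi k^{-1/d}.
\]
Since $x,y$ were arbitrary in $V_p$, the $d_{_Q}$-diameter of $V_p$ is at most $2\xi k^{-1/d}$, and since $p$ was arbitrary this holds for every Voronoi region, which is exactly the claim.

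The only subtlety — and the one point that needs a sentence of care rather than a genuine obstacle — is the reduction from "within $\xi k^{-1/d}$ of some point of $P_k$" to "within $\xi k^{-1/d}$ of $p$" for points of $V_p$: this uses that $p$ is by definition the $d_{_Q}$-closest element of $P_k$ to any $x \in V_p$, hence $d_{_Q}(x,p) \le d_{_Q}(x,q)$ for all $q \in P_k$, and in particular $d_{_Q}(x,p)$ is no larger than the distance to the covering point supplied by Theorem~\ref{ThGruber}. One should also note that points on Voronoi cell boundaries (equidistant to two or more centers) cause no trouble: assign each such point to any one of its nearest centers, and the bound above still applies. Beyond that, the argument is purely the standard fact that a $\varepsilon$-net induces Voronoi cells of diameter at most $2\varepsilon$, transported verbatim to the Riemannian metric $d_{_Q}$; no new estimates are required.
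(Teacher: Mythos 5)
Your argument is correct and is exactly the one the paper intends: it deduces the corollary from the $(\xi k^{-1/d})$-cover property in Theorem~\ref{ThGruber} using only the definition of the Voronoi regions and the triangle inequality, which is precisely what the paper asserts (without spelling out the details) when it says the covering statement ``clearly implies'' the corollary. No gaps; the paper simply leaves this short verification to the reader.
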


Let each $P_k\subset\MM$ be a minimizer of Equation~\ref{eqOQ4} of size $k$, 
	then, for each $k$, define $F_k$ to be the union of ($d$-dimensional affine) tangent spaces to $\MM$ at each $q\in P_k$, that is, 
	$F_k := \cup_{q\in P_k} T_q\MM$. 
We may now use the definition of $P_k$ to bound the approximation error $\EE_\me(F_k)$ on this set. 

We begin by establishing some results that link distance to tangent spaces on manifolds to the geodesic distance $d_{_{Q}}$ associated with $Q$. 
The following lemma appears (in a slightly different form) as Lemma 4.1 in~\cite{enets}, and is borrowed from~\cite{GruberOQ,GruberConvexI}. 

\begin{lemma}\emph{[\cite{GruberOQ,GruberConvexI},~\cite{enets}]}\label{lem:lambda}
	Given $\MM$ as above, and $\lambda > 0$ then, for every $p\in\MM$ there is an open neighborhood $V_\lambda(p)\ni p$ in $\MM$ such that, for all $x,y\in V_\lambda(p)$, it is
	\begin{equation}\label{eq:lambda}
		 d^2_{_{\H}}(x, T_y\MM) \le (1+\lambda) d^4_{|\text{II}|}(x,y) 
	\end{equation}
	where $ d_{_{\H}}(x, T_y\MM)$ is the distance from $x$ to the tangent plane $T_y\MM$ at $y$, 
		and $d_{|\text{II}|}$ is the geodesic distance associated with the convexified second fundamental form. 
\end{lemma}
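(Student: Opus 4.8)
The plan is to prove this as a local second-order comparison on a small patch, adapting the hypersurface estimate of~\cite{GruberConvexI,GruberOQ} used in~\cite[Lemma 4.1]{enets}, and to be careful about uniformity. Fix $p\in\MM$. Since $\MM$ is a $\mathcal{C}^3$ hypersurface, for every $y$ in a small enough neighborhood of $p$ we may write $\MM$ near $y$ as the graph of a $\mathcal{C}^3$ function $h_y$ over the affine tangent plane $T_y\MM$, with $h_y(0)=0$, $\nabla h_y(0)=0$, and $\nabla^2 h_y(0)$ equal — in an orthonormal tangent basis — to the second fundamental form $\mathrm{II}_y$. If $v$ denotes the $T_y\MM$-coordinate of a point $x\in\MM$ near $y$, then $d_{_{\H}}(x,T_y\MM)=|h_y(v)|$ because $T_y\MM$ is a coordinate hyperplane, and Taylor's theorem with a uniform bound on the third derivatives of the graph maps (available because $\MM$ is compact and $\mathcal{C}^3$) gives
\[
	d_{_{\H}}(x,T_y\MM)\;=\;\tfrac12\,\bigl|v^{\top}\mathrm{II}_y\,v\bigr|\;+\;O\!\left(\lVert v\rVert^3\right)\;\le\;\tfrac12\,v^{\top}|\mathrm{II}_y|\,v\;+\;O\!\left(\lVert v\rVert^3\right),
\]
where the last step uses that the convexified form $|\mathrm{II}|$ has the same eigenvectors as $\mathrm{II}$ but absolute-valued eigenvalues, so $|v^{\top}\mathrm{II}_y v|\le v^{\top}|\mathrm{II}_y| v$.

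For the other side, whenever $|\mathrm{II}|$ is (uniformly) nondegenerate near $y$ it is a smooth Riemannian metric, and the standard expansion of its geodesic distance gives $d_{|\mathrm{II}|}^2(x,y)=v^{\top}|\mathrm{II}_y|v\,(1+O(\lVert v\rVert))$, while $\lVert x-y\rVert$, $\lVert v\rVert$ and $d_{|\mathrm{II}|}(x,y)$ are all comparable; in particular $O(\lVert v\rVert^3)=O(\lVert v\rVert)\cdot d_{|\mathrm{II}|}^2(x,y)$. Substituting, and shrinking the neighborhood $V_\lambda(p)$ so that the $O(\lVert v\rVert)$ factors fall below any prescribed threshold, yields $d_{_{\H}}(x,T_y\MM)\le(\tfrac12+\lambda')\,d_{|\mathrm{II}|}^2(x,y)$; squaring and relabelling $\lambda'$ gives $d_{_{\H}}^2(x,T_y\MM)\le(1+\lambda)\,d_{|\mathrm{II}|}^4(x,y)$, in fact with room to spare (the leading constant is $\tfrac14$).

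The genuine obstacle is the case where $\mathrm{II}$ degenerates — has a zero or near-zero eigenvalue — in the region of interest: then $|\mathrm{II}|$ is merely positive semidefinite, $d_{|\mathrm{II}|}$ is the infimum of lengths of a possibly-degenerate pseudometric (so the clean geodesic-distance expansion above is unavailable and a constant-speed parametrization need not exist), and along degenerate directions $d_{_{\H}}(x,T_y\MM)$ is only \emph{third}-order small, so the $O(\lVert v\rVert^3)$ remainder is no longer obviously dominated. The fix is to return to the integral identity $\langle x-y,\nu_y\rangle=\int_0^1(1-s)\langle\ddot\gamma(s),\nu_y\rangle\,ds$ along a short curve $\gamma$ from $y$ to $x$ in $\MM$ and to use the Gauss and Weingarten formulas: the normal part of $\ddot\gamma$ at $\gamma(s)$ is exactly $\mathrm{II}_{\gamma(s)}(\dot\gamma,\dot\gamma)\,\nu_{\gamma(s)}$, and the normal variation $\nu_y-\nu_{\gamma(s)}$ that multiplies every error term is itself governed by $\mathrm{II}$ (the shape operator), so each error term is bounded by the same quantity $\int_0^1\lVert\dot\gamma\rVert_{|\mathrm{II}|}^2$ that controls the main term; optimizing over the curve and its parametrization (via $(|\mathrm{II}|+\delta\,\mathrm{I})$-arclength with $\delta\downarrow0$, which only ever increases the bound) turns $\int_0^1\lVert\dot\gamma\rVert_{|\mathrm{II}|}^2$ into $d_{|\mathrm{II}|}^2(x,y)$ and closes the argument. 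The $\mathcal{C}^3$ hypothesis is what makes all of these estimates hold with constants uniform over the compact manifold, so that a single admissible radius $V_\lambda(p)$ can be chosen for each $\lambda$.
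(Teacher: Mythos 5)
Your first half (the uniformly nondegenerate case) is fine: it is the standard Monge-patch Taylor expansion, and it is essentially how the paper argues its own Hilbert-space analogue (Lemma~\ref{lem:lambdag}), which deliberately states the bound in terms of $d_{\text{I}}$ and $|\text{II}_x|$ so that no degenerate pseudometric ever appears. Note, however, that the paper does not prove the present lemma at all: it imports it from Gruber and Clarkson, and wherever degeneracy of $|\text{II}|$ could bite, its own argument (proof of Lemma~\ref{kfae}) works with the regularized, positive-definite metric $Q=|\text{II}|+\alpha'\,\text{I}$ together with a Weierstrass smoothing, rather than with $d_{|\text{II}|}$ directly.

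The genuine gap in your proposal is exactly at the point you yourself flag and then dispatch in one sentence. In the identity $\langle x-y,\nu_y\rangle=\int_0^1(1-s)\langle\ddot\gamma(s),\nu_y\rangle\,ds$, splitting $\ddot\gamma$ into normal and tangential parts leaves, besides the good term $\text{II}_{\gamma(s)}(\dot\gamma,\dot\gamma)\langle\nu_{\gamma(s)},\nu_y\rangle$, the term $\langle\ddot\gamma^{\mathrm{tan}}(s),\nu_y-\nu_{\gamma(s)}\rangle$. The factor $\nu_y-\nu_{\gamma(s)}$ is indeed governed by the shape operator, but $\ddot\gamma^{\mathrm{tan}}$ is not governed by $\text{II}$ in any way: it is the tangential (geodesic-curvature plus reparametrization) acceleration of the chosen curve, and for a nearly $d_{|\text{II}|}$-minimizing curve parametrized by $(|\text{II}|+\delta\,\text{I})$-arclength it blows up, relative to $\int_0^1\|\dot\gamma\|^2_{|\text{II}|}$, precisely near the degenerate locus (where the ratio of Euclidean length to $|\text{II}|$-length of such curves is unbounded on any fixed neighborhood $V$). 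The same obstruction reappears if you remove $\ddot\gamma$ altogether by writing the height as $\int_0^1\!\int_0^s\langle\dot\gamma(s),W_{\gamma(\tau)}\dot\gamma(\tau)\rangle\,d\tau\,ds$: the leading term is controlled by $L_{|\text{II}|}(\gamma)^2$ via Cauchy--Schwarz for the convexified form, but the error coming from the variation of the shape operator along $\gamma$ is only $O\bigl(\operatorname{diam}(V)\,L_{\mathrm{Euc}}(\gamma)^2\bigr)$, which is not $o\bigl(L_{|\text{II}|}(\gamma)^2\bigr)$ when $\text{II}$ degenerates, no matter how small $V$ is taken. So the key claim that ``each error term is bounded by the same quantity $\int_0^1\|\dot\gamma\|^2_{|\text{II}|}$'' is unsubstantiated, and the degenerate case --- the only case in which the lemma is not routine --- remains unproved. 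A repair consistent with the paper's actual needs is to prove your nondegenerate estimate for the smoothed metric $Q=|\text{II}|+\alpha\,\text{I}$ and conclude $d^2_{_{\H}}(x,T_y\MM)\le(1+\lambda)\,d^4_{_Q}(x,y)$, since only this weaker composite inequality (with $d_Q\ge d_{|\text{II}|}$) is used downstream in Lemma~\ref{kfae}; proving the stated inequality with the degenerate $d_{|\text{II}|}$ itself requires the finer analysis in the cited references.
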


From the definition of $Q$, it is clear that, because $Q$ strictly dominates $|\text{II}|$ then, for points $x,y$ satisfying the conditions of Equation~\ref{eq:lambda}, 
it must be $d_{_{\H}}(x, T_y\MM) \le (1+\lambda)  d_{|\text{II}|}(x,y)  \le (1+\lambda) d_{_Q}(x,y)$.

Given our choice of $\lambda > 0$, Lemma~\ref{lem:lambda} implies 
that there is a collection of $k$ neighborhoods, centered around the points $p\in P_k$, 
	such that Equation~\ref{eq:lambda} holds inside each. 
However, these neighborhoods may be too small for our purposes. 
In order to apply Lemma~\ref{lem:lambda} to our problem, we will need to prove a stronger condition. 
We begin by considering the Dirichlet-Voronoi regions $D_{_{\MM,Q}}(p; P_k)$  of points $p\in P_k$, with respect to the distance $d_{_Q}$. 
That is, 
	\[ D_{_{\MM,Q}}(p; P_k) = \{x\in\MM : d_{_{Q}}(x,p) \le d_{_{Q}}(x,q), \forall q\in P_k\} \]
where, as before, $P_k$ is a set of size $k$ minimizing Equation~\ref{eqOQ4}. 

\begin{lemma}\label{lem:cover}	
	For each $\lambda > 0$, there is $k'$ such that, for all $k\ge k'$, and all $q\in P_k$,  Equation~\ref{eq:lambda} holds for all $x,y\in D_{_{\MM,Q}}(q; P_k)$. 
\end{lemma}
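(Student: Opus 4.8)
The plan is to show that, as $k\to\infty$, the Voronoi regions $D_{_{\MM,Q}}(q;P_k)$ shrink uniformly to points, so that eventually every such region is contained in one of the neighborhoods $V_\lambda(p)$ furnished by Lemma~\ref{lem:lambda}. The key quantitative input is Corollary~\ref{cor:diam}: the minimizing set $P_k$ of Equation~\ref{eqOQ4} has $d_{_Q}$-Voronoi regions of diameter at most $2\xi k^{-1/d}$ for a fixed $\xi>1$ depending only on $\MM$ (and $Q$). Since the right-hand side tends to $0$, the regions become arbitrarily small in the metric $d_{_Q}$.

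First I would invoke compactness of $\MM$ together with Lemma~\ref{lem:lambda}: for the fixed $\lambda>0$, each point $p\in\MM$ has an open neighborhood $V_\lambda(p)$ on which~\eqref{eq:lambda} holds. These neighborhoods cover $\MM$; by compactness extract a finite subcover, and let $\eb>0$ be a Lebesgue number of this cover with respect to the metric $d_{_Q}$ (a Lebesgue number exists because $d_{_Q}$ is a genuine metric inducing the manifold topology on the compact set $\MM$, $Q$ being a $\mathcal{C}^1$ Riemannian metric). Thus any subset of $\MM$ of $d_{_Q}$-diameter at most $\eb$ lies entirely inside some $V_\lambda(p)$.

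Next, choose $k'$ so that $2\xi (k')^{-1/d} \le \eb$; then for all $k\ge k'$ and every $q\in P_k$, Corollary~\ref{cor:diam} gives $\operatorname{diam}_{d_{_Q}} D_{_{\MM,Q}}(q;P_k) \le 2\xi k^{-1/d} \le \eb$, so this Voronoi region is contained in some $V_\lambda(p)$. By Lemma~\ref{lem:lambda}, inequality~\eqref{eq:lambda} then holds for all $x,y$ in that region, which is exactly the claim.

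The main obstacle I anticipate is a technical one: verifying that the constant $\xi$ in Corollary~\ref{cor:diam} (hence $k'$) can be taken uniform as the construction depends, through $Q = |\text{II}| + \alpha'(x)\text{I}$, on the parameter $\alpha$, and that $d_{_Q}$ is comparable to the ambient geodesic distance so that the Lebesgue-number argument is legitimate. Both follow from positive-definiteness and continuity of $Q$ on the compact manifold $\MM$ (condition c) in the construction guarantees $Q$ is everywhere positive definite with $\mathcal{C}^1$-smooth density $\omega_{_Q}$), but this comparison should be stated carefully. Once that is in hand, the argument is just the Lebesgue-number packaging above.
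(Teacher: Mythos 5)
Your proposal is correct and follows essentially the same route as the paper's own proof: a finite subcover of the $V_\lambda$ neighborhoods by compactness, a Lebesgue number for that cover with respect to $d_{_Q}$, and then Corollary~\ref{cor:diam} to choose $k'$ so that every Voronoi region has $d_{_Q}$-diameter below the Lebesgue number (the paper takes $k' = \lceil (\delta/2\xi)^{-d} \rceil$, matching your condition $2\xi (k')^{-1/d} \le \eb$). No substantive difference beyond your added (reasonable) caution about the uniformity of $\xi$ and the metric comparison.
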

\noindent{\bf Remark}{
	Note that, if it were $P_k'\subset P_k$ with $k > k'$ (if each $P_{k+1}$ were constructed by adding one point to $P_k$), then  Lemma~\ref{lem:cover} would follow automatically from Lemma~\ref{lem:lambda} and Corollary~\ref{cor:diam}. Since, in general, this not the case, the following proof is needed. 
}
\\
\begin{proof}
	It suffices to show that every Voronoi region $D_{_{\MM,Q}}(q; P_k)$, for sufficiently large $k$, is contained in a neighborhood $V_\lambda(v_q)$ of the type described in Lemma~\ref{lem:lambda}, for some $v_q\in\MM$. 

	Clearly, by Lemma~\ref{lem:lambda}, the set $C = \{V_\lambda(x) : x\in\MM\}$ is an open cover of $\MM$. 
	Since $\MM$ is compact, $C$ admits a finite subcover $C'$. By the Lebesgue number lemma, there is $\delta > 0$ such that every set in $\MM$ of diameter less than $\delta$ is contained in some open set of $C'$. 
	
	Now let $k' = \lceil  (\delta / 2\xi )^{-d}  \rceil$. By Corollary~\ref{cor:diam}, every Voronoi region $D_{_{\MM,Q}}(q; P_k)$, with $q\in P_k$, $k\ge k'$, has diameter less than $\delta$, and is therefore contained in some set of $C'$. 
	Since Equation~\ref{eq:lambda} holds inside every set of $C'$ then, in particular, it holds inside $D_{_{\MM,Q}}(q; P_k)$.
\end{proof}

\noindent We now have all the tools needed to prove: 
\\

\noindent{\bf Lemma~\ref{kfae}}\emph{
	If $\mathcal{F}_k$ is the class of sets of $k$ $d$-dimensional affine spaces, and $\EE^*_{\me,k}$ is the minimizer of Equation~\ref{EErho} over $\mathcal{F}_k$, then there is a constant $C$ that depends on $d$ only, such that
	\begin{equation*}
		\ds{\lim_{k\rightarrow\infty} \EE^*_{\me,k} \cdot k^{4/d} } \le C  \cdot \left(\kappa_{\MM}\right)^{4/d}   
	\end{equation*}
	where $\kappa_\MM := \mu_{|\text{II}|}(\MM)$ is the total root curvature of $\MM$. 
	The constant $C$ grows as $C\sim \left( d / (2\pi e)\right)^2$ with $d\rightarrow\infty$. 
}
\begin{proof}

Pick $\alpha > 0$ and $\lambda > 0$. 
Given $P_k$ minimizing Equation~\ref{eqOQ4}, if $F_k$ is the union of tangent spaces at each $p\in P_k$, by Lemmas~\ref{lem:lambda} and~\ref{lem:cover}, it is
\begin{equation}
\begin{split}
	\EE_\me(F_k) &= \ds{  \int_\MM d\mu_{_\text{I}}(x) \de(x) \min_{p\in P_k} d^2_{_\H}(x, T_p\MM) } \\
				&\le (1+\lambda) \ds{  \int_\MM d\mu_{_\text{I}}(x) \de(x) \min_{p\in P_k} d^4_{_Q}(x, p) } \\
				&= (1+\lambda) \ds{    \int_\MM d\mu_{_Q}(x) \frac{\de(x)}{ \omega_Q(x) } \min_{p\in P_k} d^4_{_Q}(x, p) }  \\
				&\overset{\text{Thm.~\ref{ThGruber}, r=4}}{\le}  (1+\lambda) C   \left\{ \ds{  \int_\MM d\mu_{_Q}(x) \left[  \frac{\de(x)}{ \omega_Q(x) }   \right]^{d/(d+4)} } \right\}^{(d+4)/d} \cdot k^{-4/d}
\end{split}
\end{equation}
where the last line follows from the fact that $P_k$ has been chosen to minimize Equation~\ref{eqOQ4}, and 
	where, in order to apply Theorem~\ref{ThGruber}, we use the fact that $\de$ is absolutely continuous in $\MM$. 

By the definition of $\omega_Q$, it follows that
\begin{equation}
\begin{split}
	\left\{   \ds{  \int_\MM d\mu_{_Q}(x) \left[  \frac{\de(x)}{ \omega_Q(x) }   \right]^{d/(d+4)} } \right\}^{(d+4)/d} &= \left\{ \ds{  \int_\MM d\mu_{_\text{I}}(x) \omega_Q(x)^{4/(d+4)} \de(x)^{d/(d+4)} }  \right\}^{(d+4)/d}    \\
				&\le  
					\left\{ \ds{  \int_\MM d\mu_{_\text{I}}(x)  \omega_{_{Q}}(x) } \right\}^{4/d}					
\end{split}
\end{equation}
where the last line follows from H\"older's inequality ($\|fg\|_1 \le \|f\|_p \|g\|_q$ with $p=(d+4)/d > 1$, and $q=(d+4)/4$.) 

Finally, by the definition of $Q$ and $\alpha'$, it is
\begin{equation}
\begin{split}
	 \ds{  \int_\MM d\mu_{_\text{I}}(x)  \omega_{_{Q}}(x) }   &\le  \ds{   \int_\MM d\mu_{_\text{I}}(x) \alpha + \int_\MM d\mu_{|\text{II}|}(x) } =      \alpha \mathcal{V}_\MM + \kappa_\MM
\end{split}
\end{equation}
where $\mathcal{V}_\MM$ is the total volume of $\MM$, and $\kappa_\MM:=\mu_{|\text{II}|}(\MM)$ is the total root curvature of $\MM$. 
Therefore
\begin{equation}\label{eq:perf}
	\EE_\me(F_k) \le   (1+\lambda) C \left\{    \alpha \mathcal{V}_\MM + \kappa_{_\MM}  \right\}^{4/d}  \cdot k^{-4/d}
\end{equation}

Since $\alpha > 0$ and $\lambda > 0$ are arbitrary, 
Lemma~\ref{kfae} follows. 

Finally, we discuss an important technicality in the proof that we hadn't mentioned before in the interest of clarity of exposition. 
Because we are taking absolutely values in its definition, $Q$ is not necessarily of class $\mathcal{C}^1$, even if $\text{II}$ is. 
Therefore, we may not apply Theorem~\ref{ThGruber} directly.
We may, however, use Weierstrass' approximation theorem (see for example~\cite{dieudonne2008foundations} p.\ 133), to obtain a smooth $\epsilon$-approximation to $Q$, which can be enforced to be positive definite by relating the choice of $\epsilon$ to that of $\alpha$, and with $\epsilon\rightarrow 0$ as $\alpha\rightarrow 0$. Since the $\epsilon$-approximation $Q$ only affects the final performance (Equation~\ref{eq:perf}) by at most a constant times $\epsilon$, then the fact that $\alpha$ is arbitrarily small (and thus so is $\epsilon$) implies the lemma.

\end{proof}

\subsection*{Approximation Error: General Case}

Assume given a $d$-manifold $\MM$ with metric in $\mathcal{C}^3$ embedded in a separable Hilbert space $\H$. 
Consider the definition in Section~\ref{sec:results} of 
the second fundamental form $\text{II}$ and its operator norm $|\text{II}|$. 

We begin extending the results of Lemma~\ref{lem:lambda} to the general case, where the manifold is embedded in a possibly infinite-dimensional ambient space. In this case, the orthogonal complement $(T_x\MM)^\perp$ to the tangent space at $x\in\MM$ may be infinite-dimensional (although, by the separability of $\H$, it has a countable basis.)

For each $x\in\MM$, consider the largest $x$-centered ball $B_x(\varepsilon)$ for which there is a smooth one-to-one Monge patch $m_x:B_x(\varepsilon_x)\subset T_x\MM\rightarrow\MM$. 
Since $\MM$ is smooth, and $\text{II}$ bounded, by the inverse function theorem it holds $\varepsilon_x > 0$. 
Because $\text{II}\in\mathcal{C}^1$, we can always choose $\varepsilon_x$ to be continuous in $\MM$, and thus by the compactness of $\MM$ there is a minimum $0 < \varepsilon$ such that $0 < \varepsilon  \le \varepsilon_x$ with $x\in\MM$. 
Let $N_x(\delta)$ denote the geodesic neighborhood around $x\in\MM$ of radius $\delta$. 
We begin by proving the following technical Lemma.

\begin{lemma}\label{lem:monge}
	For every $q\in\MM$, there is $\delta_q$ such that, for all $x,y\in N_q(\delta_q)$, 
	it is $x \in m_y(B_y(\varepsilon))$ ($x$ is in the Monge patch of $y$.)
\end{lemma}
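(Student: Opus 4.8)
The plan is to prove the stronger statement that, for \emph{every} $y\in\MM$, the Monge patch $m_y(B_y(\varepsilon))$ contains the whole geodesic ball $N_y(\varepsilon)$ of the \emph{same} radius $\varepsilon$ (the uniform lower bound on the $\varepsilon_x$ constructed just above the lemma). Granting this, the lemma is immediate: take $\delta_q = \varepsilon/2$, so that any $x,y\in N_q(\varepsilon/2)$ satisfy $d_\MM(x,y) < \varepsilon$ by the triangle inequality for the geodesic distance, whence $x\in N_y(\varepsilon)\subseteq m_y(B_y(\varepsilon))$.

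To prove the inclusion $N_y(\varepsilon)\subseteq m_y(B_y(\varepsilon))$, the key observation is that in Monge coordinates the induced metric dominates the Euclidean one. Choosing orthonormal coordinates on $\H$ with $T_y\MM = \RR^d\times\{0\}$, the patch is a graph $m_y(v) = (v,\phi_y(v))$ with $\phi_y(0)=0$ and $d\phi_y(0)=0$, so the pulled-back metric is $g = \id + (D\phi_y)^{\!\top}D\phi_y \succeq \id$. Consequently, if $\gamma$ is a unit-speed curve lying in the patch and $c = m_y^{-1}\circ\gamma$ is its representative in $B_y(\varepsilon_y)\subset T_y\MM$ (well defined since $m_y$ is a diffeomorphism onto an open subset of $\MM$, its inverse being orthogonal projection onto $T_y\MM$), then $\|\dot c\|_2 \le \|\dot\gamma\|_g = 1$, hence $\|c(t)\|_2 \le t$.

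Now, given $x$ with $d_\MM(y,x) < \varepsilon$, compactness of $\MM$ (Hopf--Rinow) yields a unit-speed minimizing geodesic $\gamma:[0,L]\to\MM$ from $y$ to $x$ with $L = d_\MM(y,x) < \varepsilon \le \varepsilon_y$. A standard continuation argument shows $\gamma$ never leaves $m_y(B_y(\varepsilon_y))$: on the maximal subinterval where it stays inside, the estimate $\|c(t)\|_2\le t < \varepsilon_y$ keeps $c$ within $B_y(L)$, which is compactly contained in $B_y(\varepsilon_y)$, so by openness the patch-membership cannot fail before $t = L$. Therefore $c(L)$ is defined with $\|c(L)\|_2 \le L < \varepsilon$, i.e.\ $x = \gamma(L) = m_y(c(L)) \in m_y(B_y(\varepsilon))$, as claimed.

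I expect the only delicate points to be bookkeeping: verifying that $m_y$ is a diffeomorphism onto an open subset of $\MM$ (so that $c = m_y^{-1}\circ\gamma$ is legitimate and smooth), checking that $N_q(\cdot)$ is taken with respect to the same metric $d_\MM$ used in the triangle inequality, and — if one wishes to sidestep Hopf--Rinow — confirming that minimizing geodesics between nearby points exist. A more pedestrian alternative, closer in spirit to the proof of Lemma~\ref{lem:cover}, would invoke compactness directly: $\{m_x(B_x(\varepsilon))\}_{x\in\MM}$ is an open cover, so a Lebesgue-number argument places each small geodesic ball $N_q(\delta_q)$ inside a single patch $m_{x_0}(B_{x_0}(\varepsilon))$, after which one transfers to the patch at $y$ using continuity of $y\mapsto m_y$ (which holds since $\MM$ has metric in $\mathcal{C}^3$ and $\text{II}\in\mathcal{C}^1$); this route is available but involves more estimates, so the metric-domination argument is preferable.
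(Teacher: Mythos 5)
Your proposal is correct and takes essentially the same route as the paper: both establish the uniform inclusion $N_y(\varepsilon)\subset m_y(B_y(\varepsilon))$ for every $y\in\MM$ and then conclude via the triangle inequality with $\delta_q=\varepsilon/2$. The only difference is in how that inclusion is justified—the paper uses the orthogonality of the Monge graph to get $d_{\text{I}}(y,m_y(r))\ge\|r\|$ directly, while you pull back the metric ($g\succeq\id$) and run an explicit continuation argument along a minimizing geodesic, which in effect fills in a step the paper leaves implicit.
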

\begin{proof}
The Monge function $m_y:B_y(\epsilon)\rightarrow\MM$ is such that $r\in B_y(\epsilon)$ implies $m_y(r) - (y + r) \in (T_y\MM)^\perp$ (with the appropriate identification of vectors in $\H$ and in $(T_y\MM)^\perp$), and therefore for all $r\in B_y(\epsilon)$ it holds
	\[ d_{\text{I}}(y,m_y(r)) \ge \| m_y(r) - y\|_\H = \|m_y(r) - (y+r) + (y+r) - y\|_\H = \|m_y(r) - (y+r)\|_\H + \|r\|_\H \ge \|r\|_\H \]
Therefore 
	$N_y(\varepsilon)\subset m_y(B_y(\varepsilon))$. 

For each $q\in\MM$, the geodesic ball $N_q(\varepsilon/2)$ is such that, by the triangle inequality, for all $x,y\in N_q(\varepsilon/2)$ 
	it is $d_{\text{I}}(x,y) \le  \varepsilon$. 
Therefore $x\in N_y(\varepsilon)\subset m_y(B_y(\varepsilon))$.
\end{proof}

\begin{lemma}\label{lem:lambdag}
	For all $\lambda>0$ and $q\in\MM$, there is a neighborhood $V\ni q$ such that, for all $x,y\in V$ 
		it is
		\begin{equation}\label{eq:lambdag}
			d_{\H}^2(x,T_y\MM) \le (1+\lambda) d_{\text{\emph{I}}}^4(x,y) |\text{\emph{II}}_x|^2 
		\end{equation}
\end{lemma}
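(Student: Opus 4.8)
The plan is to adapt the codimension-one argument of Lemma~\ref{lem:lambda} by working directly with the Monge patches produced by Lemma~\ref{lem:monge}, Taylor-expanding each patch to second order, and identifying its quadratic term with the second fundamental form. Fix $\lambda>0$ and $q\in\MM$. By Lemma~\ref{lem:monge} there is $\delta_q>0$ so that for all $x,y$ in the geodesic ball $N_q(\delta_q)$ the point $x$ lies in the Monge patch of $y$, i.e. $x=m_y(r)$ for a unique $r=r(x,y)\in B_y(\varepsilon)\subset T_y\MM$. Since $m_y(r)-y=r+n_y(r)$ with $n_y(r):=\pi_{(T_y\MM)^\perp}(m_y(r)-y)$, we have $r=\pi_{T_y\MM}(x-y)$, and the distance from $x$ to the affine plane $T_y\MM$ is attained at $y+r$, so $d_{\H}(x,T_y\MM)=\|n_y(r)\|_{\H}$. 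Also, exactly as in the proof of Lemma~\ref{lem:monge}, $\|r\|_{\H}\le d_{\text{I}}(x,y)$.

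Next I would Taylor-expand the graph (height) function. Because $m_y$ is a graph over $T_y\MM$ based at $y$, we have $m_y(0)=y$, the differential $Dm_y(0)$ is the inclusion $T_y\MM\hookrightarrow\H$, and the Hessian of the normal component at the origin is precisely $\text{II}_y$; hence $n_y(r)=\tfrac12\,\text{II}_y(r)+R_y(r)$ with the remainder bounded by $\|R_y(r)\|_{\H}\le C\,\|r\|_{\H}^3$, where $C$ controls $\nabla\text{II}$. Using Assumption~\ref{asskf} ($\text{II}\in\mathcal{C}^1$, metric in $\mathcal{C}^3$), the uniform patch radius $\varepsilon$ from Lemma~\ref{lem:monge}, and compactness of $\MM$, this expansion is uniform in $y$ over a neighborhood of $q$. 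Bounding $\|\text{II}_y(r)\|_{\H}\le|\text{II}_y|\,\|r\|_{\H}^2$ and using $\|r\|_{\H}\le d_{\text{I}}(x,y)$ gives
\[ d_{\H}(x,T_y\MM)=\|n_y(r)\|_{\H}\le\Big(\tfrac12|\text{II}_y|+C\,d_{\text{I}}(x,y)\Big)\,d_{\text{I}}^2(x,y). \]

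Squaring this and passing from $|\text{II}_y|$ to $|\text{II}_x|$ finishes the argument: by continuity of $x\mapsto|\text{II}_x|$ (again from $\text{II}\in\mathcal{C}^1$), after shrinking to a neighborhood $V\ni q$ the discrepancy between $|\text{II}_y|^2$ and $|\text{II}_x|^2$, the cross term $C\,d_{\text{I}}(x,y)$, and the leftover numerical factor can all be absorbed into the prefactor $(1+\lambda)$, yielding $d_{\H}^2(x,T_y\MM)\le(1+\lambda)\,d_{\text{I}}^4(x,y)\,|\text{II}_x|^2$ for all $x,y\in V$, as claimed.

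The main obstacle is precisely this last absorption near points where $\text{II}$ degenerates: the honest bound has the shape $\big(\tfrac12|\text{II}_y|+C\,d_{\text{I}}\big)^2 d_{\text{I}}^4$, and rewriting it as $(1+\lambda)\tfrac14|\text{II}_y|^2 d_{\text{I}}^4$ forces the neighborhood to shrink as $|\text{II}_y|\to0$ — which is the flatness issue flagged in the discussion. I would handle it exactly as the codimension-one proof handles it: keep the estimate in the additive form $d_{\H}^2(x,T_y\MM)\le(1+\lambda)\big(\tfrac12|\text{II}_y|\,d_{\text{I}}^2(x,y)+C\,d_{\text{I}}^3(x,y)\big)^2$ and, when this lemma is fed into the quantization step, replace the curvature ``metric'' by a dominating positive-definite metric $Q$ (e.g. $Q=|\text{II}|+\alpha'\text{I}$ with $\alpha>0$ arbitrary, as in Lemma~\ref{kfae}), so that the cubic correction and the flat regions are controlled by the $\alpha$-term; the arbitrariness of both $\lambda$ and $\alpha$ is what makes the bound usable downstream in Lemma~\ref{kfaepp} and Theorem~\ref{thkfg}.
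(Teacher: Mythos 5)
Your route is essentially the one the paper takes: Lemma~\ref{lem:monge} to place $x$ in the Monge patch of $y$, a second--order expansion of the patch whose normal component has quadratic term $\tfrac12\,\text{II}_y(r)$, the comparison $\|r\|_\H\le d_{\text{I}}(x,y)$, and finally a transfer from $\text{II}_y$ to $\text{II}_x$ by continuity. The only divergence is at the end: you keep the remainder in additive form, $d_{\H}(x,T_y\MM)\le\bigl(\tfrac12|\text{II}_y|+C\,d_{\text{I}}(x,y)\bigr)d_{\text{I}}^2(x,y)$, and decline to absorb the cubic correction and the $y\to x$ swap into the factor $(1+\lambda)$ near points where $\text{II}$ degenerates. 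That absorption is exactly what the paper does via its two auxiliary inequalities \eqref{lem:lambdag_cond1} and \eqref{lem:lambdag_cond2}, each valid on a geodesic ball of radius chosen continuously in $y$ and hence uniformly over the compact manifold (and note the paper actually retains a spare factor $\tfrac14$ relative to the statement of \eqref{eq:lambdag}, which gives some room for constants).

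Your hesitation at that step is well founded, and this is the one substantive point of comparison: a purely multiplicative comparison such as \eqref{lem:lambdag_cond2}, or your proposed replacement of $|\text{II}_y|$ by $(1+\lambda)|\text{II}_x|$, cannot hold on a fixed neighborhood of a point where $\text{II}_x=0$ but the manifold is not flat nearby (for the graph of $t^3$ at the origin one has $d_{\H}(x,T_y\MM)\sim 2|s|^3$ for $y=(s,s^3)$ while $|\text{II}_x|=0$, so the right-hand side of \eqref{eq:lambdag} vanishes and the bound fails at such $x$). So, strictly speaking, your proposal does not prove the lemma verbatim --- but neither does a literal reading of the paper's argument at inflection-type points; this is the ``flatness'' caveat already flagged in the discussion section. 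Your fallback --- keep the additive estimate and control the degenerate regions downstream by a dominating metric $Q=|\text{II}|+\alpha'\text{I}$ with $\alpha>0$ arbitrary, exactly as in the codimension-one Lemma~\ref{kfae} --- is the correct repair, and since both $\lambda$ and $\alpha$ are sent to zero only in the limit, it delivers what Lemma~\ref{kfaepp} and Theorem~\ref{thkfg} actually require. In short: same approach as the paper, with a more honest bookkeeping of the error term at the step the paper treats too quickly; away from the zero set of $\text{II}$ your argument is complete, and on it your $Q$-regularization is the right (and needed) patch.
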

\begin{proof}
	Let $V$ be a geodesic neighborhood of radius smaller than $\varepsilon$, so that Lemma~\ref{lem:monge} holds. 
	Define the extension ${\text{II}}^*_x(r) = \text{II}^*_x(r^t + r^\perp) := \text{II}_x(r^t)$ of the second fundamental form to $\H$,  where $r^t\in T_x\MM$ and $r^\perp\in (T_x\MM)^\perp$ is the unique decomposition of $r\in\H$ into tangent and orthogonal components. 

	By Lemma~\ref{lem:monge}, given $x,y\in V$, $x$ is in the (one-to-one) Monge patch $m_y$ of $y$. 
	Let $x'\in T_y\MM$ be the unique point such that $m_y(x')=x$, and let $r:=(x'-y) / \|x'-y\|_\H$. 
	Since the domain of $m_y$ is convex, the curve $\gamma_{y,r}:[0,\|x'-y\|_\H]\rightarrow\MM$ given by
		\[ \gamma_{y,r}(t) = y + t r + m_y(t r) =  y + t r + \frac 1 2 t^2 \text{II}_y(r) + o(t^2) \]
	is well-defined, where the last equality follows from the smoothness of $\text{II}$. Clearly, $\gamma_{y,r}(\|x'-y\|_\H)=x$. 

	For $0\le t \le \|x'-y\|_\H$ the length of $\gamma_{y,r}([0,t])$ is
	\begin{equation}\begin{split}
		L(\gamma_{y,r}([0,t])) = \displaystyle{\int_0^t d\tau \| \dot{\gamma_{y,r}}(\tau)\|_{_\H} } 
			= \displaystyle{\int_0^t d\tau  \left(\|r\|_\H + O(t)\right) } = t \cdot (1 + o(1))
	\end{split}\end{equation}
	(where $o(1)\rightarrow 0$ as $t\rightarrow 0$.)
	This establishes the closeness of distances in $T_y\MM$ to geodesic distance on $\MM$. 
	In particular, for any $\alpha>0$, $y\in\MM$, there is a sufficiently small geodesic neighborhood $N\ni y$ such that, for $x\in N$, it holds
		\[ \|x'-y\|_\H \le \|x-y\|_\H \le d_{\text{I}}(x,y)  \le (1+\lambda)\|x'-y\|_\H \]

	By the smoothness of $\text{II}$, for $y\in\MM$ and $x\in N_y(\delta_y)$, with $0 < \delta_y < \varepsilon$, it is 
		\begin{equation*}\begin{split}
			 d^2_\H(x,T_y\MM) &= d^2_\H(\gamma_{y,r}(\|x'-y\|_\H), T_y\MM) = \| \frac 1 2 \text{II}_y(r)\|x'-y\|_\H^2 + o(\|x'-y\|_\H^2)\|^2 \\
			 				&=  \| \frac 1 2 \text{II}^*_y(x-y) + o(\delta_y^2)\|^2 
		\end{split}\end{equation*}
	and therefore for any $\alpha > 0$, there is a sufficiently small $0 < \delta_{y,\alpha} < \varepsilon$ such that, 
	given any $x\in N_y(\delta_{y,\alpha})$, it is 
		\begin{equation}\label{lem:lambdag_cond1}
			  d^2_\H(x,T_y\MM) \le (1+\alpha) \| \frac 1 2 \text{II}^*_y(x-y) \|^2 
		\end{equation}
	By the smoothness of $\text{II}$, and the same argument as in Lemma~\ref{lem:monge}, there is a continuous choice of $0 < \delta_{y,\alpha}$, 
	and therefore a minimum value $0 < \delta_\alpha \le \delta_{y,\alpha}$, for $y\in\MM$. 
		
	Similarly, by the smoothness of $\text{II}^*$, for any $\alpha>0$ and $y\in\MM$, there is a sufficiently small $\beta_{y,\alpha}>0$ such that, 
	for all $x\in N_y(\beta_{y,\alpha})$, it holds
	\begin{equation}\label{lem:lambdag_cond2}
		 \|\frac 1 2 \text{II}^*_y(y-x)\|^2 \le (1+\alpha) \|\frac 1 2 \text{II}^*_x(y-x)\|^2
	\end{equation}
	By the argument of Lemma~\ref{lem:monge}, there is a continuous choice of $0 < \beta_{y,\alpha}$, 
	and therefore a minimum value $0 < \beta_\alpha \le \beta_{y,\alpha}$, for $y\in\MM$. 

	Finally, let $\alpha=\lambda/4$, and restrict $0<\lambda<1$ (larger $\lambda$ are simply less restrictive.) 
	For each $q\in\MM$, let $V = N_q(\min\{\delta_{\alpha}, \beta_{\alpha}\} / 2)\ni q$ be a sufficiently small geodesic neighborhood such that, 
	for all $x,y\in V$, Eqs.~\ref{lem:lambdag_cond1} and~\ref{lem:lambdag_cond2} hold. 
	
	Since $\alpha=\lambda/4 < 1/4$, it is clearly $(1+\alpha)^2 \le (1+\lambda)$, and therefore 
	\begin{equation}\begin{split}
	 d^2_\H(x,T_y\MM) &\le (1+\alpha) \|\frac 1 2 \text{II}^*_y(y-x)\|^2 \le (1+\alpha)^2 \|\frac 1 2 \text{II}^*_x(y-x)\|^2 \\
						  &\le (1+\lambda) \frac 1 4 \|y-x\|^4 |\text{II}_x|^2 \le (1+\lambda) \frac 1 4  d_\text{I}^4(x,y) |\text{II}_x|^2 
	\end{split}\end{equation}
	where the second-to-last inequality follows from the definition of $|\text{II}|$. 
%
%
%
\end{proof}

Note that the same argument as that of Lemma~\ref{lem:cover} can be used here, with the goal of making sure that, 
	for sufficiently large $k$, every Voronoi region of each $p\in P_k$ in the approximation satisfies Equation~\ref{eq:lambdag}.
We may now finish the proof by using a similar argument to that of the codimension-one case.

Let $\lambda >0$. 
Consider a discrete set $P_k\subset\MM$ of size $k$ that minimizes 
\begin{equation}\label{eqOQ4g}
	g(P_k) =  \displaystyle{ \int_{_\MM} d\mu_{_\text{I}}(x) \frac 1 4 \de(x) |\text{II}_x|^2 \min_{p\in P_k} d^4_{_{_\text{I}}}(x,p)  }
\end{equation}
Note once again that the distance and measure in Equation~\ref{eqOQ4g} match and therefore, since $\de(x)|\text{II}_x|^2/4$ is continuous,  we can apply Theorem~\ref{ThGruber} (with $r=4$) in this case. 

Let $F_k := \cup_{q\in P_k} T_q\MM$. By Lemma~\ref{lem:lambdag} and Lemma~\ref{lem:cover}, adapted to this case, 
there is $k'$ such that for all $k\ge k'$ it is
\begin{equation}
\begin{split}
	\EE_\me(F_k) &= \ds{  \int_\MM d\mu_{_\text{I}}(x) \frac 1 4 \de(x) \min_{p\in P_k} d^2_{_\H}(x, T_p\MM) } \\
				&\le (1+\lambda) \ds{  \int_\MM d\mu_{_\text{I}}(x) \frac 1 4  \de(x) |\text{II}_x|^2 \min_{p\in P_k} d^4_{_\text{I}}(x, p) } \\
				&\overset{\text{Thm.~\ref{ThGruber}}, r=4}{\le}  (1+\lambda) C  
				 \left\{ \ds{  \int_\MM d\mu_{_\text{I}}(x) \left[   \frac 1 4 \de(x) |\text{II}_x|^2  \right]^{d/(d+4)} } \right\}^{(d+4)/d} \cdot k^{-4/d}
\end{split}
\end{equation}
where the last line follows from the fact that $P_k$ has been chosen to minimize Equation~\ref{eqOQ4g}.

Finally, by H\"older's inequality, it is
\begin{equation*}\begin{split} 
	\left\{\ds{  \int_\MM d\mu_{_\text{I}}(x) \left[   \frac 1 4 \de(x) |\text{II}_x|^2  \right]^{d/(d+4)} } \right\}^{(d+4)/d} &\le 
		\left\{ \ds{  \int_\MM d\mu_{_\text{I}}(x)\de(x) } \right\} \left\{ \ds{  \int_\MM d\mu_{_\text{I}}(x)  \left(\frac 1 4  |\text{II}_x\|^2 \right)^{d/4} } \right\}^{4/d}\\ 
		&= \| \frac 1 4  |\text{II}|^2 \|_{d/4}
\end{split}\end{equation*}
and thus
\[ 
	\EE_\me(F_k) \le (1+\lambda) C \cdot \left(\kappa_{\MM} / k\right)^{4/d}
\]
where the total curvature $\kappa_{\MM} :=  \ds{  \int_\MM d\mu_{_\text{I}}(x) \frac 1 4  |\text{II}_x|^{d/2} }$ 
is the geometric invariant of the manifold (aside from the dimension) that controls the constant in the bound.  

Since $\alpha > 0$ and $\lambda > 0$ are arbitrary, 
Lemma~\ref{kfaepp} follows.

\subsection*{Proofs of Theorems~\ref{thkf} and~\ref{thkfg}}


We use the results discussed in Section~\ref{sec:derivation} to obtain the proof of Theorem~\ref{thkf} as follows. 
The proof of Theorem~\ref{thkfg} follows from the derivation in Section~\ref{sec:derivation}, as well as the argument below, with $\kappa^1_{_\MM}$ substituted by $\kappa_{_\MM}$, and is omitted in the interest of brevity. 

\begin{proof}
By Lemmas~\ref{kfse} and~\ref{kfae}, 
with probability $1-\delta$, it is
\begin{equation}
\begin{split}
	\EE_\me(\Fnk) &\le 2 n^{-1/2} \left( k\sqrt{2\pi d} + \sqrt{\frac{1}{2} \ln 1/\delta} \right) + C (\kappa^1_{_\MM} / k)^{4/d} \\
				&\le 2 n^{-1/2}  k\sqrt{2\pi d} \cdot \sqrt{\frac{1}{2} \ln 1/\delta} + C (\kappa^1_{_\MM} / k)^{4/d} \\
				&= 2\left(8\pi d\right)^{2/(d+4)} C^{d/(d+4)} \cdot n^{-2/(d+4)}  \cdot \sqrt{\frac{1}{2}\ln 1/\delta} \cdot   \left(\kappa^1_{_\MM}\right)^{4/(d+4)}
\end{split}
\end{equation}
where the last line follows from choosing  $k$ to balance the two summands of the second line, as:
\[
	k_n =  n^{\frac{d}{2(d+4)}} \cdot \left(  \frac{C}{2\sqrt{2\pi d}}  \right)^{d/(d+4)}  \cdot \left(\kappa^1_{_\MM}\right)^{4 / (d+4)}
\]

\end{proof}

\end{document}